\newcommand{\indep}{\mathbin{\rotatebox[origin=c]{90}{$\models$}}}
\renewcommand{\P}{\mathbb{P}}
\newcommand{\E}{\mathbb{E}}
\newcommand{\N}{\mathcal{N}}
\renewcommand{\H}{\mathcal{H}}
\newcommand{\D}{\mathcal{D}}
\renewcommand{\L}{\mathcal{L}}
\newcommand{\half}{\frac{1}{2}}
\newcommand{\w}{\mathbf{w}}
\newcommand{\x}{\mathbf{x}}
\newcommand{\n}{\mathbf{n}}
\newcommand{\y}{\mathbf{y}}
\newcommand{\hx}{\hat{\x}}
\newcommand{\z}{\mathbf{z}}
\newcommand{\xd}{x^{(i)}}
\newcommand{\yd}{y^{(i)}}
\newcommand{\be}{\begin{equation}}
\newcommand{\ee}{\end{equation}}
\newcommand{\bea}{\begin{eqnarray}}
\newcommand{\eea}{\end{eqnarray}}
\newcommand{\utext}[2]{\underbrace{#1}_{\text{#2}}}
\DeclareMathOperator{\TC}{TC}
\begin{document}

\title{Emergence of Invariance and Disentanglement \\ in Deep Representations\thanks{A preliminary version of this paper was presented at the ICML Workshop PADL on August 10, 2017.}} 

\author{\name Alessandro Achille \email achille@cs.ucla.edu \\
       \addr Department of Computer Science\\
       University of California\\
       Los Angeles, CA 90095, USA
       \AND
       \name Stefano Soatto \email soatto@cs.ucla.edu \\
       \addr Department of Computer Science\\
       University of California\\
       Los Angeles, CA 90095, USA}
\editor{Yoshua Bengio}

\maketitle

\begin{abstract}%
Using established principles from Statistics and Information Theory, we show that invariance to nuisance factors in a deep neural network is equivalent to information minimality of the learned representation, and that stacking layers and injecting noise during training naturally bias the network towards learning invariant representations. We then decompose the cross-entropy loss used during training and highlight the presence of an inherent overfitting term. We propose regularizing the loss by bounding such a term in two equivalent ways: One with a Kullbach-Leibler term, which relates to a PAC-Bayes perspective; the other using the information in the weights as a measure of complexity of a learned model, yielding a novel Information Bottleneck for the weights. Finally, we show that invariance and independence of the components of the representation learned by the network are bounded above and below by the information in the weights, and therefore are implicitly optimized during training. The theory enables us to quantify and predict sharp phase transitions between underfitting and overfitting of random labels when using our regularized loss, which we verify in experiments, and sheds light on the relation between the geometry of the loss function, invariance properties of the learned representation, and generalization error.
\end{abstract}

\begin{keywords}
Representation learning; PAC-Bayes; information bottleneck; flat minima;  generalization; invariance; independence;
\end{keywords}


\section{Introduction}

Efforts to understand the empirical success of deep learning have followed two main lines: Representation learning and optimization. In optimization, a deep network is treated as a black-box family of functions for which we want to find parameters (\emph{weights}) that yield good generalization.
Aside from the difficulties due to the non-convexity of the loss function, the fact that  deep networks are heavily over-parametrized presents a theoretical challenge: The bias-variance trade-off suggests they may severely overfit; yet, even without explicit regularization, they perform remarkably well in practice. Recent work suggests that this is related to properties of the loss landscape and to the implicit regularization performed by stochastic gradient descent (SGD), 
but the overall picture is still hazy \citep{zhang2016understanding}. 

Representation learning, on the other hand, focuses on
the properties of the  representation
learned by the layers of the network (the \emph{activations})
while remaining largely agnostic to the particular optimization process used. In fact, the effectiveness of deep learning is often ascribed to the ability of deep networks to learn representations that are insensitive (invariant) to nuisances such as translations, rotations, occlusions, and also ``disentangled,'' that is, separating factors in the high-dimensional space of data \citep{bengio2009learning}.
Careful engineering of the architecture plays an important role in achieving insensitivity to simple geometric nuisance transformations, like translations and small deformations; however, more complex and dataset-specific nuisances still need to be learned. This poses a riddle:
\emph{If neither the architecture nor the loss function explicitly enforce invariance and disentangling, how can these properties emerge consistently in deep networks trained by simple generic optimization?}

\begin{figure}

\centering
\includegraphics[width=.5\linewidth]{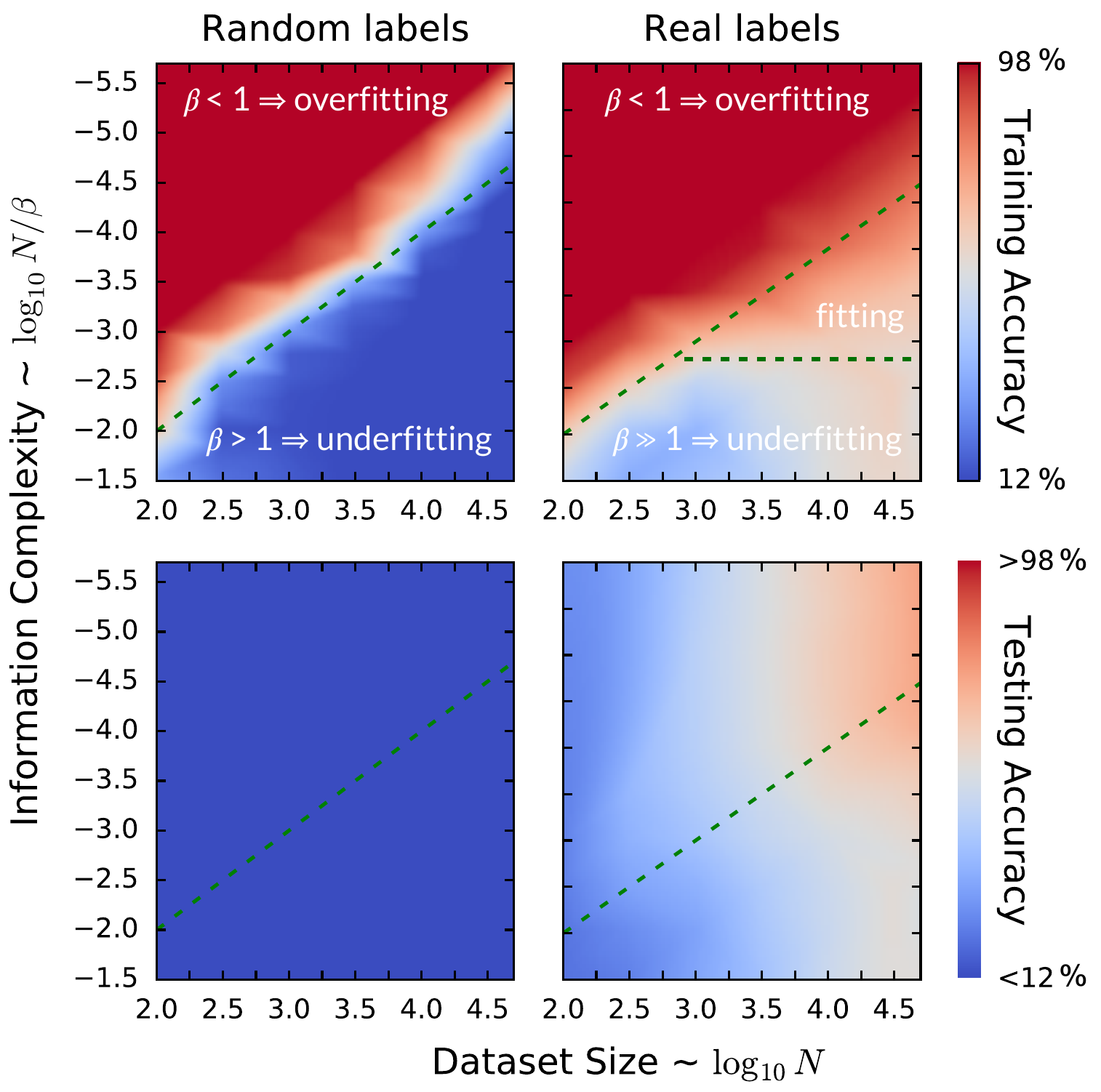}

\vspace*{-0.5em}

\caption{
\label{fig:phase-transiction}
\small
\textbf{(Left)} The AlexNet model of \cite{zhang2016understanding} achieves high accuracy (red) even when trained with random labels on CIFAR-10. Using the IB Lagrangian to limit information in the weights leads to a sharp transition to underfitting (blue) predicted by the theory (dashed line). To overfit, the network needs to memorize the dataset, and the  information needed grows linearly. \textbf{(Right)} For real labels, the information sufficient to fit the data without overfitting saturates to a value that depends on the dataset, but somewhat independent of the number of samples. Test accuracy shows a uniform blue plot for random labels, while for real labels it increases with the number of training samples, and is higher near the critical regularizer value $\beta=1$. }

\end{figure}

In this work, we address these questions by establishing information theoretic connections
between these concepts.
In particular, we show that: (a) a sufficient representation of the data is invariant if and only if it is minimal, \textit{i.e.}, it contains the smallest amount of information, although may not have small dimension; (b) the information in the representation, along with its total correlation (a measure of disentanglement) are tightly bounded by the information that the weights contain about the dataset;  (c) the information in the weights, which is related to overfitting \citep{hinton1993keeping}, flat minima \citep{hochreiter1997flat}, and a PAC-Bayes upper-bound on the test error (\Cref{sec:pac-bayes}), can be controlled by implicit or explicit regularization. Moreover, we show that adding noise during the training is a simple and natural way of biasing the network towards invariant representations. 

Finally, we perform several experiments with realistic architectures and datasets to validate the assumptions underlying our claims. In particular, we show that using the information in the weights to measure the complexity of a deep neural network (DNN), rather than the number of its parameters, leads to a sharp and theoretically predicted 
transition between overfitting and underfitting regimes for random labels, shedding light on the questions of \cite{zhang2016understanding}.

\subsection{Related work}

The Information Bottleneck (IB) was introduced by \citet{tishby2000information} as a generalization of minimal sufficient statistics that allows trading off fidelity (sufficiency) and complexity of a representation. In particular, the IB Lagrangian reduces finding a  minimal sufficient representation to a variational optimization problem. 
Later, \citet{tishby2015deep} and \citet{shwartz2017opening} advocated using the IB between the test data and the activations of a deep neural network, to study the sufficiency and minimality of the resulting representation.
In parallel developments, the IB Lagrangian was used as a regularized loss function for learning representation, leading to new information theoretic regularizers \citep{achille2016information,alemi2016deep,alemi2018fixing}.

In this paper, we introduce an IB Lagrangian between the weights of a network and the training data, as opposed to the traditional one between the activations and the test datum.  We show that the former can be seen both as a generalization of Variational Inference, related to \citet{hinton1993keeping}, and as a special case of the more general PAC-Bayes framework \citep{mcallester2013pac}, that can be used to compute high-probability upper-bounds on the test error of the network. One of our main contributions is then to show that, due to a particular duality induced by the architecture of deep networks, minimality of the weights (a function of the training dataset) and of the learned representation (a function of the test input) are connected: in particular we show that networks regularized either explicitly, or implicitly by SGD, are biased toward learning invariant and disentangled representations. The theory we develop could be used to explain the phenomena described in small-scale experiments in \citet{shwartz2017opening}, whereby the initial fast convergence of SGD is related to sufficiency of the representation, while the later asymptotic phase is related to compression of the activations: While SGD is seemingly agnostic to the property of the learned representation, we show that it does minimize the information in the weights, from which the compression of the activations follows as a corollary of our bounds. Practical implementation of this theory on real large scale problems is made possible by advances in Stochastic Gradient Variational Bayes \citep{kingma2013auto,kingma2015variational}.

Representations learned by deep networks are observed to be insensitive to complex nuisance transformations of the data. To a certain extent, this can be attributed to the architecture. For instance, the use of convolutional layers and max-pooling can be shown to yield insensitivity to local group transformations \citep{bruna2011classification,anselmi2016invariance,soatto2016visual}. But for more complex, dataset-specific, and in particular non-local, non-group transformations, such insensitivity must be acquired as part of the learning process, rather than being coded in the architecture. We show that a sufficient representation is maximally insensitive to nuisances if and only if it is minimal, allowing us to prove that a regularized network is naturally biased toward learning invariant representations of the data.

Efforts to develop a theoretical framework for representation learning include \citet{tishby2015deep} and \citet{shwartz2017opening},  who consider representations as stochastic functions that approximate minimal sufficient statistics, different from \citet{bruna2011classification} who construct representations as (deterministic) operators that are invertible in the limit, while exhibiting reduced sensitivity (``stability'') to small perturbations of the data. Some of the deterministic constructions are based on the assumption that the underlying data is spatially stationary, and therefore work best on textures and other visual data that are not subject to occlusions and scaling nuisances. \citet{anselmi2016invariance} develop a theory of invariance to locally compact groups, and aim to construct maximal (``distinctive'') invariants, like  \citet{sundaramoorthiPVS09} that, however, assume nuisances to be infinite-dimensional groups \citep{grenander93}. These efforts are limited by the assumption that nuisances have a group structure.
Such assumptions were relaxed by \cite{soatto2016visual} who advocate seeking for {\em sufficient} invariants, rather than {\em maximal} ones. We further advance this approach, but unlike prior work  on sufficient dimensionality reduction, we do not seek to minimize the dimension of the representation, but rather its information content, as prescribed by our theory. Recent advances in Deep Learning provide us with computationally viable methods to train high-dimensional models and predict and quantify observed phenomena such as convergence to flat minima and transitions from overfitting to underfitting random labels, thus bringing the theory to fruition.
Other theoretical efforts focus on complexity considerations, and explain the success of deep networks by ways of statistical or computational efficiency \citep{lee2017ability, bengio2009learning, lecun2012learning}.
``Disentanglement'' is an often-cited property of deep networks \citep{bengio2009learning}, but seldom formalized and studied analytically, although \citet{Ver2015maximally} has suggested studying it using the Total Correlation of the representation, also known as multi-variate mutual information, which we also use. 

We connect invariance properties of the representation to the geometry of the optimization residual, and to the phenomenon of \emph{flat minima} \citep{dinh2017sharp}.

Following \citep{mcallester2013pac}, we have also explored relations between our theory and the PAC-Bayes framework \citep{dziugaite2017computing}.
As we show, our theory can also be derived in the PAC-Bayes framework, without resorting to information quantities and the Information Bottleneck, thus providing both an independent and alternative derivation, and a theoretically rigorous  way to upper-bound the optimal loss function. The use of PAC-Bayes theory to study the generalization properties of deep networks has been championed by \cite{dziugaite2017computing}, who point out that minima that are flat in the sense of having a large volume, toward which stochastic gradient descent algorithms are implicitly or explicitly biased \citep{chaudhari2018cycles}, naturally relates to the PAC-Bayes loss for the choice of a normal prior and posterior on the weights. This has been leveraged by \cite{dziugaite2017computing} to compute non-vacuous PAC-Bayes error bounds, even for deep networks.

\section{Preliminaries}
\label{sec:notation}

A training set $\D=\set{\x,\y}$, where $\x = \set{\xd}_{i=1}^N$ and $\y=\set{\yd}_{i=1}^N)$, is a collection of $N$ randomly sampled data points $\xd_i$ and their associated (usually discrete) labels. The samples are assumed to come from an unknown, possibly complex, distribution $p_\theta(x,y)$, \hyphenation{pa-ram-e-tri-zed} parametrized by a parameter $\theta$. Following a Bayesian approach, we also consider $\theta$ to be a random variable, sampled from some unknown prior distribution $p(\theta)$, but this requirement is not necessary (see \Cref{sec:pac-bayes}).
A test datum $x$ is also a random variable. Given a test sample, our goal is to infer the random variable $y$, which is therefore referred to as our \emph{task}.

We will make frequent use of the following standard information theoretic quantities \citep{cover2012elements}:  Shannon entropy $H(x)=\E_p[-\log p(x)]$,  conditional entropy $H(x|y):=\E_{\bar{y}} [H(x|y=\bar{y})] = H(x,y) - H(y)$, (conditional) mutual information $I(x;y|z) = H(x|z) - H(x|y,z)$, Kullbach-Leibler (KL) divergence $KL(p(x) || q(x)) = \E_p[\log p/q]$,   cross-entropy $H_{p,q}(x) = \E_p[-\log q(x)]$, and total correlation $TC(z)$, which is also known as multi-variate mutual information and defined as 
\[ \textstyle \TC(z) = \KL{p(z)}{\prod_i p(z_i)},\]
where $p(z_i)$ are the marginal distributions of the components of $z$.
Recall that the KL divergence between two distributions is always non-negative and zero if and only if they are equal. In particular $\TC(z)$ is zero if and only if the components of $z$ are independent, in which case we say that $z$ is \emph{disentangled}.
We often use of the following identity:
\begin{equation*}
\label{eq:information-kl}
I(z;x) = \E_{x\sim p(x)} \KL{p(z|x)}{p(z)}.
\end{equation*}
We say that $x, z, y$ form a Markov chain, indicated with $x \rightarrow z \rightarrow y$, if $p(y|x,z) = p(y|z)$. The Data Processing Inequality (DPI) for a Markov chain $x \rightarrow z \rightarrow y$ ensures that $I(x;z) \ge I(x; y)$: If $z$ is a (deterministic or stochastic) function of $x$, it cannot contain more information about $y$ than $x$ itself (we cannot create new information by simply applying a function to the data we already have).

\subsection{General definitions and the Information Bottleneck Lagrangian}

We say that $z$ is a \emph{representation} of $x$ if $z$ is a stochastic function of $x$, or equivalently if the distribution of $z$ is fully described by the conditional $p(z|x)$. In particular we have the Markov chain $y \to x \to z$. We say that a representation $z$ of $x$ is \textbf{sufficient} for $y$ if $y \indep x \ | \ z$, or equivalently if $I(z;y)=I(x;y)$; it is \textbf{minimal} when $I(x;z)$ is smallest among sufficient representations. To study the trade-off between sufficiency and minimality,  \citet{tishby2000information}  introduces the Information Bottleneck Lagrangian
\begin{equation}
\L(p(z|x)) = H(y|z) + \beta\, I(z;x),
\label{eq:IBL}
\end{equation}
where $\beta$ trades off sufficiency (first term) and minimality (second term); in the limit $\beta \rightarrow 0$, the IB Lagrangian is minimized when $z$ is minimal and sufficient. It does not impose any restriction on disentanglement nor invariance, which we introduce next.

\subsection{Nuisances for a task}

A \textbf{nuisance} is any random variable that affects the observed data $x$, but is not informative to the task we are trying to solve. More formally, a random variable $n$ is a nuisance for the task $y$ if
$y \indep n$, or equivalently  $I(y;n)=0$.
Similarly, we say that the representation $z$ is \textbf{invariant} to the nuisance $n$ if
$z \indep n$, or $I(z;n)=0$. When $z$ is not strictly invariant but it minimizes $I(z;n)$ among all sufficient representations, we say that the representation $z$ is \textbf{maximally insensitive} to $n$.

One typical example of nuisance is a group $G$, such as translation
or rotation, acting on the data.
In this case, a deterministic representation $f$ is invariant to the nuisances
if and only if for all $g \in G$ we have $f(g\cdot x) = f(x)$.
Our definition however is more general in that it is not restricted to deterministic functions, nor to group nuisances.
An important consequence of this generality is that
the observed data $x$ can always be written as a deterministic
function of the task $y$ and of all nuisances $n$ affecting the data,
as explained by the following proposition.

\begin{prop}[Task-nuisance decomposition, Appendix \ref{lemma:task-nuisance-proof}]
\label{lemma:dist-factorization}
Given a joint distribution $p(x,y)$, where $y$ is a discrete random variable, we can always find a random variable $n$ independent of $y$ such that $x=f(y,n)$, for some deterministic function $f$.
\end{prop}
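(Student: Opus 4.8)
The plan is to realize each conditional law $p(x \mid y = \bar y)$ as a deterministic reparametrization of a single source of randomness that is chosen independent of $y$ — i.e., to use an inverse-transform / ``noise-outsourcing'' construction. Concretely, I would let $n$ be a uniform random variable on $[0,1]$, drawn independently of $y$, and for each value $\bar y$ of the discrete label build a measurable map $g_{\bar y}\colon [0,1] \to \mathcal{X}$ whose pushforward of the uniform measure equals $p(x \mid y = \bar y)$. Setting $f(\bar y, u) := g_{\bar y}(u)$ then produces a candidate decomposition $x = f(y,n)$, and the substance of the argument is to check that $(f(y,n),\,y)$ has the same joint law as $(x,y)$, while $n \indep y$ holds by construction.

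First I would construct the maps $g_{\bar y}$. When $x$ is scalar this is the classical quantile construction: letting $F_{\bar y}$ be the cumulative distribution function of $p(x \mid y = \bar y)$ and $g_{\bar y} = F_{\bar y}^{-1}$ its generalized inverse, one has $g_{\bar y}(U) \sim p(x \mid y = \bar y)$ for $U \sim \mathrm{Unif}[0,1]$. For vector-valued or more general $x$ I would instead invoke the fact that any Borel probability measure on a standard Borel space is the image of the uniform measure on $[0,1]$ under some measurable map (equivalently, compose the scalar construction with a Borel isomorphism $\mathcal{X} \cong [0,1]$, or apply the quantile transform coordinate by coordinate to the successive conditionals). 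This is where the only genuine hypothesis enters: the data space must be standard Borel, which holds for the $\mathbb{R}^d$-valued data considered here.

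Next I would assemble $f$ and verify the joint law. Because $y$ is discrete, the map $f(y,n) = \sum_{\bar y} \mathbf{1}_{\{y = \bar y\}}\, g_{\bar y}(n)$ is a countable combination of measurable pieces and hence jointly measurable, so no subtlety arises in gluing the $g_{\bar y}$ together. To match distributions I would condition on $y$: for each $\bar y$, the conditional law of $f(y,n)$ given $y = \bar y$ is the law of $g_{\bar y}(n)$, which — since $n \indep y$ leaves the conditional law of $n$ uniform — is the pushforward of the uniform measure under $g_{\bar y}$, namely $p(x \mid y = \bar y)$. Integrating these conditionals against $p(y)$ shows that $(f(y,n),\,y)$ and $(x,y)$ agree as joint distributions, so we may take $x = f(y,n)$ with $n$ independent of $y$, as claimed.

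I expect the main obstacle to be measure-theoretic rather than conceptual, namely guaranteeing the existence and joint measurability of the reparametrizing maps $g_{\bar y}$ in full generality; discreteness of $y$ reduces the $y$-side to a countable case analysis, so the crux is the standard-Borel representation of each $p(x \mid y = \bar y)$, after which the distributional identity is a one-line computation. It is worth noting \emph{why} a construction is needed at all: the naive choice $n := x$ trivially satisfies $x = f(y,n)$ but fails the required independence $n \indep y$, and it is precisely the inverse-transform step — pushing a $y$-independent uniform through a $y$-dependent map — that simultaneously reproduces $p(x\mid y)$ and delivers $n \indep y$.
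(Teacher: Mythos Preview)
Your proposal is correct and follows essentially the same route as the paper: take $n \sim \mathrm{Unif}[0,1]$ independent of $y$, reduce to scalar $x$ via a Borel isomorphism with $\mathbb{R}$, apply the generalized inverse CDF $F_{\bar y}^{-1}$ for each value $\bar y$, and use discreteness of $y$ to glue these into a single measurable $f$. Your write-up is in fact more explicit than the paper's in verifying the joint law and in flagging why the naive choice $n := x$ fails independence.
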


\section{Properties of optimal representations}

To simplify the inference process, instead of working directly with the observed high dimensional data $x$, we want to use a representation $z$
that captures and exposes only the information relevant for the task $y$.
Ideally, such a representation should be (a) \textbf{sufficient} for the task $y$, \emph{i.e.} $I(y;z) = I(y;x)$, so that information about $y$ is not lost; among all sufficient representations, it should be (b) \textbf{minimal}, \emph{i.e.} $I(z;x)$ is minimized, so that it retains as little about $x$ as possible, simplifying the role of the classifier; finally, it should be (c) \textbf{invariant} to the effect of nuisances $I(z;n) = 0$, so that the final classifier will not overfit to spurious correlations present in the training dataset between nuisances $n$ and labels $y$.
Such a representation, if it exists, would not be unique, since any bijective mapping preserves all these properties. We can use this to our advantage and further aim to make the representation (d) maximally  \textbf{disentangled}, \emph{i.e.,} choose the one(s) for which $\TC(z)$ is minimal. This simplifies the classifier rule,  since no information will be present in the higher-order correlations between the components of $z$.

Inferring a representation that satisfies all these properties may seem
daunting. However, in this section we show that we only need to enforce (a) sufficiency and (b) minimality, from which invariance and disentanglement follow naturally thanks to the stacking of noisy layers of computation in deep networks. We will then show that sufficiency and minimality of the learned representation can also be promoted easily through implicit or explicit regularization during the training process.

\begin{prop}[Invariance and  minimality, Appendix \ref{prop:invariance-minimality-proof}]
\label{prop:minimal-iff-invariant}
Let $n$ be a nuisance for the task $y$ and let $z$ be a sufficient representation of the input $x$. Suppose that 
$z$ depends on $n$ only through $x$ (\emph{i.e.,}  $n\to x\to z$). Then,
\[
I(z;n) \leq I(z;x) - I(x;y).
\]
Moreover, there is a nuisance $n$ such that
equality holds up to a (generally small) residual $\epsilon$
\[
I(z;n) = I(z;x) - I(x;y) - \epsilon,
\]
where  $\epsilon := I(z;y|n) - I(x;y)$. In particular
$0 \leq \epsilon \leq H(y|x) $, and $\epsilon=0$ whenever $y$ is a deterministic function of $x$.
 Under these conditions, a sufficient statistic $z$ is invariant (maximally insensitive) to nuisances if and only if it is minimal.
\end{prop}
\begin{rmk}
\textnormal{Since $\epsilon\leq H(y|x)$, and usually $H(y|x) = 0$ or at least $H(y|x) \ll I(x;z)$, we can generally ignore the extra term.}
\end{rmk}

An important consequence of this proposition is that we can construct invariants by simply reducing the amount of information $z$ contains about $x$,
while retaining the minimum amount $I(z;x)$ that we need for the task $y$. This provides the network a way to automatically learn invariance to complex nuisances, which is complementary to the invariance imposed by the architecture.
Specifically, one way of enforcing minimality explicitly, and hence invariance, is through the IB Lagrangian.
\begin{cor}[Invariants from the Information Bottleneck]
\label{cor:ib-invariant}
Minimizing the IB Lagrangian
\[
\L(p(z|x)) = H(y|z) + \beta\, I(z;x),
\]
in the limit $\beta\to 0$, yields a sufficient invariant representation $z$ of the test datum $x$ for the task $y$.
\end{cor}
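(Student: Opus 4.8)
The plan is to reduce the corollary to the preceding Proposition on invariance and minimality: I will show that the minimizer of the IB Lagrangian, in the limit $\beta\to 0$, is a minimal sufficient representation, after which invariance follows immediately from that Proposition. The only genuinely new content is the optimization/limit argument; the invariance conclusion is then a one-line application.

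First I would rewrite the sufficiency term using $H(y|z) = H(y) - I(y;z)$. Since $H(y)$ does not depend on the choice of the conditional $p(z|x)$, minimizing $\L$ is equivalent to minimizing $-I(y;z) + \beta\, I(z;x)$. Because $y \to x \to z$ is a Markov chain, the Data Processing Inequality gives $I(y;z) \le I(y;x)$, with equality precisely when $z$ is sufficient; hence $H(y|z)$ is bounded below by $H(y|x)$, and this lower bound is attained exactly on sufficient representations.

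The key step is the limiting argument. Fix a minimal sufficient representation $z^*$ with $I(z^*;x) = I_{\min}$, so that $\L(z^*) = H(y|x) + \beta\, I_{\min}$. For any representation $z$ that is \emph{not} sufficient we have $H(y|z) = H(y|x) + \delta$ with $\delta > 0$, and since $I(z;x)\ge 0$ this yields $\L(z) \ge H(y|x) + \delta$. Therefore, as soon as $\beta < \delta / I_{\min}$, the representation $z^*$ strictly beats $z$; every insufficient representation is thus eventually dominated as $\beta\to 0$, forcing the minimizers toward sufficiency. Restricting now to sufficient representations, $H(y|z) = H(y|x)$ is constant, so minimizing $\L$ reduces to minimizing $\beta\, I(z;x)$, i.e.\ minimizing $I(z;x)$ — which is exactly the minimality condition. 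Hence the IB minimizer is minimal and sufficient.

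Finally I would invoke the Proposition. Since $z$ is by definition a stochastic function of $x$, any nuisance $n$ influences $z$ only through $x$, so the Markov hypothesis $n\to x\to z$ is automatically met; combined with the sufficiency and minimality just established, the Proposition states that a sufficient statistic is invariant (maximally insensitive) to nuisances if and only if it is minimal, and so the IB minimizer is invariant, as claimed. I expect the main obstacle to be making the limit $\beta\to 0$ fully rigorous: one must address existence and attainment of minimizers over $p(z|x)$ (the infimum need not be achieved at every $\beta$), and must argue that the selected family genuinely converges to the \emph{minimal} sufficient representation rather than merely becoming sufficient — most cleanly by viewing $\beta$ as a tie-breaker that, among sufficient representations, singles out the one of least $I(z;x)$.
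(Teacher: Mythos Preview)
Your proposal is correct and follows the same route as the paper: first argue that the IB minimizer in the limit $\beta\to 0$ is minimal and sufficient, then invoke \Cref{prop:minimal-iff-invariant} to conclude invariance. The only difference is that the paper dispatches the first step by citing \citet{tishby2000information}, whereas you spell out the optimization/limit argument (and correctly flag its attainment subtleties); the structure and the key reduction to the preceding proposition are identical.
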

Remarkably, the IB Lagrangian can be seen as the standard cross-entropy loss, plus a regularizer $I(z;x)$ that promotes invariance. This fact, without proof, is implicitly used in \cite{achille2016information}, who also provide an efficient algorithm 
to perform the optimization. \cite{alemi2016deep} also propose a related algorithm and empirically show improved resistance to adversarial nuisances.
In addition to modifying the cost function, invariance can also be fostered by choice of architecture:
\begin{cor}[Bottlenecks promote invariance]
\label{cor:noise}
Suppose we have the Markov chain of layers
\[
x \to z_1 \to z_2 ,
\]
and suppose that there is a communication or computation bottleneck between $z_1$ and $z_2$
such that $I(z_1;z_2) < I(z_1;x)$. Then, if $z_2$ is still sufficient,
it is more invariant to nuisances than $z_1$.
More precisely, for all nuisances $n$ we have
$I(z_2;n) \leq I(z_1;z_2) - I(x;y)$.
\end{cor}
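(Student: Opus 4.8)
The plan is to reduce the statement to \Cref{prop:minimal-iff-invariant} (Invariance and minimality) applied to the deeper layer $z_2$, together with one elementary chain-rule inequality that lets me replace $I(z_2;x)$ by $I(z_1;z_2)$. First I would verify the hypotheses of \Cref{prop:minimal-iff-invariant} for $z_2$: the representation $z_2$ is sufficient by assumption, and since $n\to x\to z_1\to z_2$ is a Markov chain, marginalizing the intermediate variable gives $n\to x\to z_2$, so $z_2$ depends on $n$ only through $x$. \Cref{prop:minimal-iff-invariant} then yields directly
\[
I(z_2;n) \le I(z_2;x) - I(x;y).
\]

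The second step is to bound $I(z_2;x)$ by $I(z_1;z_2)$. This is \emph{not} the Data Processing Inequality in the obvious direction (which would only give $I(z_2;x)\le I(z_1;x)$), so I would instead expand $I(z_2;x,z_1)$ with the chain rule in two ways. Using the Markov property $z_2\indep x \mid z_1$ gives $I(z_2;x,z_1)=I(z_1;z_2)$, while the other expansion gives $I(z_2;x,z_1)=I(z_2;x)+I(z_2;z_1\mid x)\ge I(z_2;x)$, since conditional mutual information is non-negative. Hence $I(z_2;x)\le I(z_1;z_2)$, and substituting into the first bound proves the quantitative claim
\[
I(z_2;n) \le I(z_1;z_2) - I(x;y).
\]

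For the qualitative statement that $z_2$ is \emph{more} invariant than $z_1$, I would note that $n\to z_1\to z_2$ is itself a Markov chain (again by marginalizing $x$ out of $n\to x\to z_1\to z_2$, using that $z_2$ is conditionally independent of everything upstream given $z_1$), so the Data Processing Inequality gives $I(z_2;n)\le I(z_1;n)$. The bottleneck hypothesis $I(z_1;z_2)<I(z_1;x)$ is what makes this a genuine, not merely weak, improvement: since $z_1$ is also sufficient (by DPI, $I(z_1;y)\ge I(z_2;y)=I(x;y)$), \Cref{prop:minimal-iff-invariant} bounds the nuisance information retained by $z_1$ by $I(z_1;x)-I(x;y)$, whereas the bound derived above for $z_2$ replaces $I(z_1;x)$ by the strictly smaller $I(z_1;z_2)$.

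I do not expect a serious obstacle here; the one non-mechanical point is recognizing that the inequality $I(z_2;x)\le I(z_1;z_2)$ is the ``transverse'' companion of the DPI rather than the DPI itself, and securing it through the two chain-rule expansions above, at which point everything else is assembly from \Cref{prop:minimal-iff-invariant} and the Markov structure.
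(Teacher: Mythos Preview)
Your proof is correct, and since the paper states this result as an unproved corollary of \Cref{prop:minimal-iff-invariant}, your argument is essentially the intended one. Two minor remarks.

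First, the inequality $I(z_2;x)\le I(z_1;z_2)$ \emph{is} the Data Processing Inequality: the Markov condition $x\to z_1\to z_2$ is just the conditional independence $x\indep z_2\mid z_1$, which is symmetric, so $z_2\to z_1\to x$ is equally a Markov chain and DPI yields $I(z_2;x)\le I(z_2;z_1)$ immediately. Your chain-rule derivation is of course a valid proof of this same fact, but you need not flag it as something other than DPI.

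Second, a slightly shorter route avoids the intermediate bound on $I(z_2;x)$ altogether: apply \Cref{prop:minimal-iff-invariant} with $z_1$ playing the role of the input. Since $z_2$ sufficient forces $z_1$ sufficient (as you observed), we have $I(z_1;y)=I(x;y)$; the chain $n\to z_1\to z_2$ holds; and $z_2$ is a sufficient representation of $z_1$ for $y$. \Cref{prop:minimal-iff-invariant} then gives directly
\[
I(z_2;n)\le I(z_2;z_1)-I(z_1;y)=I(z_1;z_2)-I(x;y),
\]
with no additional step needed. This is likely the one-line derivation the paper has in mind when it labels the statement a corollary.
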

Such a bottleneck can happen
for example because $\dim(z_2) < \dim(z_1)$, \emph{e.g.}, after a pooling layer, or because the channel
between $z_1$ and $z_2$ is noisy, \emph{e.g.}, because of dropout.

\begin{prop}[Stacking increases invariance]
Assume that we have the Markov chain of layers
\[
x \to z_1 \to z_2 \to \ldots \to z_L,
\]
and that the last layer $z_L$ is sufficient of $x$ for $y$.
Then $z_L$ is more insensitive to nuisances than all the preceding layers.
\end{prop}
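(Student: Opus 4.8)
The plan is to reduce the claim to a single application of the Data Processing Inequality, after reading the correct Markov structure off the hypotheses. I will interpret ``$z_L$ is more insensitive to nuisances than all the preceding layers'' in the sense used throughout this section: for every nuisance $n$ and every $i<L$ one has $I(z_L;n)\le I(z_i;n)$.

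First I would record the two Markov chains the setup provides. Since a nuisance $n$ acts on the data $x$ and each layer is a stochastic function of its predecessor, the joint law factorizes along $n\to x\to z_1\to\cdots\to z_L$; adjoining the task gives $y\to x\to z_1\to\cdots\to z_L$. From the latter and the DPI, $I(x;y)\ge I(z_1;y)\ge\cdots\ge I(z_L;y)$, and because $z_L$ is assumed sufficient, $I(z_L;y)=I(x;y)$, so this chain of inequalities collapses to equalities and every intermediate layer $z_i$ is itself sufficient. This observation is what makes the conclusion meaningful rather than vacuous: each $z_i$ already carries all the task information, so any reduction of nuisance information along the stack is obtained without sacrificing sufficiency.

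The key step is then to restrict the full chain to the sub-chain $n\to z_i\to z_L$ for each $i<L$. This is legitimate because conditioning on an intermediate layer $z_i$ renders the downstream variable $z_L$ independent of everything upstream of $z_i$, and in particular of $n$: the factorization gives $p(z_L\mid z_i,n)=p(z_L\mid z_i)$, which is exactly the Markov property $n\to z_i\to z_L$. Applying the DPI to this sub-chain yields $I(z_L;n)\le I(z_i;n)$ for every $i<L$ and every nuisance $n$, which is the assertion.

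I expect the only real subtlety (rather than a genuine obstacle) to be the bookkeeping in the previous step: one must verify that the single hypothesized chain of layers, together with $n\to x$, really entails the reduced property $n\to z_i\to z_L$, and not merely $n\to x\to z_L$. Once that is in place the result is immediate. As an aside, the same ordering can be obtained through minimality instead of through $n$ directly: the DPI along $x\to z_1\to\cdots\to z_L$ gives $I(z_L;x)\le I(z_i;x)$, and since all layers are sufficient, \Cref{prop:minimal-iff-invariant} (via its bound $I(z;n)\le I(z;x)-I(x;y)$) identifies the most minimal layer with the most insensitive one, thereby recovering the conclusion and tying it back to the invariance--minimality equivalence.
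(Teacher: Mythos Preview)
Your argument is correct, and it is essentially the route the paper has in mind: the proposition is stated without an explicit proof (neither in the body nor in the appendix), being left as an immediate consequence of the Markov structure and the DPI. Your extraction of the sub-chain $n\to z_i\to z_L$ and the single DPI step is the intended mechanism, and your remark that sufficiency of $z_L$ forces sufficiency of every intermediate $z_i$ is the right way to explain why the sufficiency hypothesis appears in the statement at all (the inequality $I(z_L;n)\le I(z_i;n)$ itself does not need it). The alternative route you sketch at the end, via $I(z_L;x)\le I(z_i;x)$ and \Cref{prop:minimal-iff-invariant}, is precisely how the paper frames the surrounding corollaries, so both of your readings match the paper.
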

Notice, however, that the above corollary does not simply imply that the more layers the merrier, as it assumes that one has successfully trained the network ($z_L$ is sufficient), which becomes increasingly difficult as the size grows. Also note that in some architectures, such as ResNets \citep{he2016deep}, the layers do not necessarily form a Markov chain because of skip connections; however, their ``blocks'' still do.

\begin{prop}[Actionable Information]
When $z = f(x)$ is a deterministic invariant, if it minimizes the IB Lagrangian it also maximizes Actionable Information \citep{soatto2013actionable}, which is ${\cal H}(x) := H(f(x))$. 
\end{prop}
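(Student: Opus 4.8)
The plan is to exploit the single structural identity that turns Actionable Information into an information-theoretic quantity already appearing in the IB Lagrangian: when $z = f(x)$ is a \emph{deterministic} function, the conditional entropy $H(z\mid x)$ vanishes, so that
\[
I(z;x) = H(z) - H(z\mid x) = H(f(x)) = \mathcal{H}(x).
\]
First I would establish this identity, which reduces the proposition to a statement purely about the compression term $I(z;x)$ and removes any reference to Actionable Information as a separate object.

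Next I would invoke \Cref{cor:ib-invariant}: minimizing $\L(p(z|x))$ in the limit $\beta\to 0$ produces a representation that is sufficient and, by \Cref{prop:minimal-iff-invariant}, invariant (maximally insensitive). For such a sufficient invariant $z$ I would use the task--nuisance decomposition of \Cref{lemma:dist-factorization}, writing $x=f(y,n)$ with $y\indep n$, so that the deterministic $z$ is a function of $(y,n)$ and, using $I(z;n)=0$,
\[
\mathcal{H}(x) = H(f(x)) = I(z;x) = I(z;y,n) = I(z;y\mid n) = I(x;y) + \epsilon,
\]
where $\epsilon = I(z;y\mid n) - I(x;y)$ is exactly the residual of \Cref{prop:minimal-iff-invariant}, with $0\le \epsilon \le H(y\mid x)$ and $\epsilon = 0$ when $y$ is a deterministic function of $x$. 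This pins down the precise value of the Actionable Information attained by the IB optimizer.

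The final step is to argue that this value is the \emph{maximum} of $\mathcal{H}(x)$ over deterministic invariant representations. Since invariance forces $z$ to carry no nuisance information, the chain-rule computation above gives $H(f(x)) = I(z;y\mid n) \le H(y) = I(x;y) + H(y\mid x)$ for every invariant representation, and a \emph{sufficient} invariant saturates the task-relevant part $I(x;y)$; any loss of sufficiency can only discard task information and lower $H(f(x))$. Hence the sufficient invariant selected by the IB Lagrangian sits at the top of the attainable range, i.e.\ it maximizes Actionable Information.

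The main obstacle I anticipate is the apparent sign paradox: the IB Lagrangian \emph{minimizes} $I(z;x)$, yet $I(z;x)=H(f(x))$ is precisely what we claim to \emph{maximize}. The resolution—which must be stated explicitly—is that the two extremal problems run over different feasible sets: IB minimizes $I(z;x)$ over all \emph{sufficient} representations, whereas Actionable Information maximizes $H(f(x))$ over all \emph{invariant} ones, and the minimal sufficient invariant is exactly the point where the two problems meet, guaranteed by the equivalence of minimality and invariance in \Cref{prop:minimal-iff-invariant}. Tracking the residual $\epsilon$ carefully, so the conclusion is exact rather than merely approximate outside the deterministic-label case, is the delicate part.
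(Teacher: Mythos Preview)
The paper states this proposition without proof; there is no corresponding argument in the appendix or in the main text. Your proposal is therefore not being compared against an existing proof but is an attempt to supply the missing one.

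Your approach is sound and uses exactly the tools the paper makes available. The reduction $I(z;x)=H(z)-H(z\mid x)=H(f(x))$ for deterministic $z$ is the correct starting identity, and routing through \Cref{cor:ib-invariant}, \Cref{lemma:dist-factorization}, and \Cref{prop:minimal-iff-invariant} is the natural line. Your resolution of the apparent min/max paradox---that the IB Lagrangian minimizes $I(z;x)$ over \emph{sufficient} representations while Actionable Information maximizes $H(f(x))$ over \emph{invariant} ones, with the minimal sufficient invariant sitting at the intersection---is the right conceptual point, and it is the content the paper leaves implicit.

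The one soft spot is the last step, where you assert that ``any loss of sufficiency can only discard task information and lower $H(f(x))$.'' That is not literally true: take $y,n$ independent uniform bits, $x=(y,n)$, and $z'=y\oplus n$; then $z'$ is a deterministic invariant with $H(z')=1=H(y)$ yet $I(z';y)=0$. What \emph{does} hold is that for every deterministic invariant $H(z')=I(z';y\mid n)\le H(y\mid n)=H(y)$, and when $H(y\mid x)=0$ the sufficient invariant attains this bound exactly, $H(z^*)=I(x;y)=H(y)$, so it is a maximizer (possibly non-uniquely). Outside the deterministic-label case the residual $\epsilon$ leaves a genuine gap between $H(z^*)=I(x;y)+\epsilon$ and the upper bound $H(y)$, and maximality is not established by your argument; you correctly flag this as the delicate part. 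Since the paper consistently treats $H(y\mid x)=0$ as the relevant regime and offers no proof of its own, your argument is adequate for the statement as the paper uses it.
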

Although \citet{soatto2013actionable} addressed maximal invariants, we only consider sufficient invariants, as advocated by \citep{soatto2016visual}.

\subsection*{Information in the weights}

 Thus far we have discussed properties of representations in generality, regardless of how they are implemented or learned.
Given a source of data (for example randomly generated, or from a fixed dataset), and given a (stochastic) training algorithm, the output weight $w$ of the training process can be thought as a random variable (that depends on the stochasticity of the initialization, training steps and of the data). We can therefore talk about the information that the weights contain about the dataset $\D$ and the training procedure, which we denote by $I(w; {\cal D})$.

Two extreme cases consist of the trivial settings where we use the weights to memorize the dataset (the most extreme form of overfitting), or where the weights are constant, or pure noise (sampled from a process that is independent of the data). In between, the amount of information the weights contain about the training turns out to be an important quantity both in training deep networks, as well as in establishing properties of the resulting representation, as we discuss in the next section. 

Note that in general we do not need to compute and optimize the quantity of information in the weights. Instead, we show that we can \emph{control} it, for instance by injecting noise in the weights, drawn from a chosen distribution, in an amount that can be modulated between zero (thus in theory allowing full information about the training set to be stored in the weights) to an amount large enough that no information is left. We will leverage this property in the next sections to perform regularization.

\section{Learning minimal weights}
\label{sec:minimal-weights}
{
In this section, we let $p_\theta(x,y)$ be an (unknown) distribution from which we randomly sample a dataset $\D$. The parameter $\theta$ of the distribution is also assumed to be a random variable with an (unknown) prior distribution $p(\theta)$. For example $p_\theta$ can be a fairly general generative model for natural images, and $\theta$ can be the parameters of the model that generated our dataset.
We then consider a deep neural network that implements a map $x \mapsto f_w(x) := q(\,{\cdot}\, |x, w)$ from an input $x$ to a class distribution $q(y|x,w)$.%
\footnote{
We use $p$ to denote the real (and unknown) data distribution, while $q$ denotes approximate distributions that are optimized during training.
}
In full generality, and following a Bayesian approach, we let the weights $w$ of the network be sampled from a parametrized distribution $q(w|\D)$,%
whose parameters are optimized during training.%
\footnote{Note that, while the two are somewhat related, here by $q(w|\D)$ we denote the output distribution of the weights after training with our choice algorithm on the dataset $\D$, and not the Bayesian posterior of the weights given the dataset, which would be denoted $p(w|\D)$. When $q(w|\D)$ is a Dirac delta at a point, we recover the standard loss function for a MAP estimate of the weights.} The network is then trained in order to minimize the expected cross-entropy loss}%
\footnote{
Note that for generality here we treat the dataset $\D$ as a random variable. In practice, when a single dataset is given, the expectation w.r.t. the dataset can be ignored. 
}
\[
H_{p,q}(\y | \x, w) = \E_{\D=(\x,\y)} {\E_{w\sim q(w|\D)} } \sum_{i=1}^N -\log q(\yd|\xd,w),
\]
in order for $q(y | x, w)$ to approximate $p_\theta(y|x)$.

One of the main problems in optimizing a DNN is that
the cross-entropy loss in notoriously prone to overfitting. In fact, one can easily minimize it even for completely random labels (see \cite{zhang2016understanding}, and \Cref{fig:phase-transiction}). {The fact that, somehow, such highly over-parametrized functions manage to generalize when trained on real labels has puzzled theoreticians and prompted some to wonder whether this may be inconsistent with the intuitive interpretation of the bias-variance trade-off theorem, whereby unregularized complex models should overfit wildly. However, as we show next, there is no inconsistency if one measures complexity by the information content, and not the dimensionality, of the weights.}

To gain some insights about the possible causes of over-fitting, we can
use the following decomposition of the cross-entropy loss
(we refer to \Cref{sec:proofs} for the proof and the precise definition of each term):
\begin{equation}
\label{eq:decomposition}
H_{p,q}(\y|\x,w)
= \utext{H(\y|\x,\theta)}{intrinsic error} + \utext{I(\theta;\y|\x, w)}{sufficiency} + 
\E_{\x,w}\utext{\KL{p(\y|\x,w)}{q(\y|\x,w)}}{efficiency} - \utext{I(\y;w|\x, \theta)}{overfitting}.
\end{equation}
The first term of the right-hand side of \eqref{eq:decomposition} relates to the intrinsic error that we would commit in predicting the labels even if we knew the underlying data distribution $p_\theta$; the second term measures how much information that the dataset has about the parameter $\theta$ is captured by the weights, the third term relates to the efficiency of the model and the class of functions $f_w$ with respect to which the loss is optimized.
The last, and only negative, term relates to how much information about the labels, but uninformative of the underlying data distribution, is memorized in the weights. Unfortunately, without implicit or explicit
regularization, the network can minimize the cross-entropy loss (LHS), by just maximizing the last term of \cref{eq:decomposition}, {\em i.e.,} by memorizing the dataset, which yields poor generalization.

To prevent the network from doing this, we can neutralize the effect of the negative term by adding it back to the loss function, leading to a regularized loss $L=H_{p,q}(\y | \x, w) + I(\y;w|\x,\theta)$.
However, computing, or even approximating, the value of $I(\y,w|\x,\theta)$ is at least as difficult as fitting the model itself.

We can, however, add an upper bound to $I(\y;w|\x,\theta)$ to obtain the desired result. In particular, we explore two alternate paths that lead to equivalent conclusions under different premises and assumptions: In one case, we use a PAC-Bayes upper-bound, which is $\KL{q(w|\D)}{p(w)}$ where $p(w)$ is an arbitrary prior. In the other, we use the IB Lagrangian and upper-bound it with the information in the weights $I(w;\D)$. We discuss this latter approach now, and look at the PAC-Bayes approach in \Cref{sec:pac-bayes}.

Notice that to successfully learn the distribution $p_\theta$, we only need to memorize in $w$ the information about the latent parameters $\theta$, that is we need $I(\D;w) = I(\D;\theta) \leq H(\theta)$, which is bounded above by a constant. On the other hand, to overfit, the term
$I(\y;w|\x,\theta) \leq I(\D;w|\theta)$ needs to grow linearly with the number of training samples $N$. We can exploit this fact to prevent overfitting by adding a Lagrange multiplier $\beta$ to make the amount of information a constant with respect to $N$, leading to the regularized loss function
\begin{equation}
\label{eq:variational-regularizer}
\L(q(w|\D)) = H_{p,q}(\y|\x,w) + \beta I(w;\D),
\end{equation}
which, remarkably, has the same general form of an IB Lagrangian, and in particular is similar to \eqref{eq:IBL}, but now interpreted as a function of the weights $w$ rather than the activations $z$. This use of the IB Lagrangian is, to the best of our knowledge, novel, as the role of the Information Bottleneck has thus far been confined to characterizing the activations of the network, and not as a learning criterion. \Cref{eq:variational-regularizer} can be seen as a generalization of other suggestions in the literature:

\paragraph{IB Lagrangian, Variational Learning and Dropout.}
Minimizing the information stored at the weights $I(w; \D)$ was proposed as far back as \citet{hinton1993keeping} as a way of simplifying neural networks, but no efficient algorithm to perform the optimization was known at the time. For the particular choice $\beta=1$, the IB Lagrangian reduces to the variational lower-bound (VLBO) of the marginal log-likelihood $p(\y|\x)$. Therefore, minimizing \cref{eq:variational-regularizer} can also be seen as a generalization of variational learning.
A particular case of this was studied by \citet{kingma2015variational},
who first showed that a generalization of Dropout, called Variational Dropout, could be used in conjunction with the \emph{reparametrization trick} \cite{kingma2013auto}
 to minimize the loss efficiently.

\paragraph{Information in the weights as a measure of complexity.}
Just as \citet{hinton1993keeping} suggested, we also advocate using the information regularizer 
$I(w;\D)$ as a measure of the effective complexity of a network, rather than the number of parameters $\dim(w)$,
which is merely an upper bound on the complexity.
As we show in experiments, this allows us to recover a version of the bias-variance trade-off where networks with lower information complexity underfit the data, and networks with higher complexity overfit. In contrast, there is no clear relationship between number of parameters and overfitting \citep{zhang2016understanding}.
Moreover, for random labels the information complexity allows us to precisely predict the overfitting and underfitting behavior of the network (\Cref{sec:expm}).

\subsection{Computable upper-bound to the loss}

Unfortunately, computing $I(w,\D) = \E_\D \KL{q(w|\D)}{q(w)}$ is still too complicated, since it requires us to know the marginal $q(w)$ over all possible datasets and trainings of the network. To avoid computing this term, we can use the more general upper-bound
\begin{align*}
\E_\D \KL{q(w|\D)}{q(w)} &\leq \E_\D \KL{q(w|\D)}{q(w)} + \KL{q(w)}{p(w)}\\
&= \E_\D \KL{q(w|\D)}{p(w)},
\end{align*}
where $p(w)$ is any fixed distribution of the weights. Once we instantiate the training set, we have a single sample of ${\cal D}$, so the expectation over $\D$ becomes trivial. This gives us the following upper bound to the optimal loss function
\begin{equation}
\label{eq:completely-general-loss}
\L(q(w|\D)) = H_{p,q}(\y|\x,w) + \beta \KL{q(w|\D)}{p(w)}
\end{equation}
Generally, we want to pick $p(w)$ in order to give the sharpest upper-bound, and to be a fully factorized distribution, \textit{i.e.}, a distribution with independent components, in order to make the computation of the KL term easier. The sharpest upper-bound to $\KL{q(w|\D)}{q(w)}$ that can be obtained using a factorized distribution $p$ is obtained when $p(w) := \tilde{q}(w) = \prod_i q(w_i)$ where $q(w_i)$ denotes the marginal distributions of the components of $q(w)$. Notice that. once a training procedure is fixed, this may be approximated by training multiple times and approximating each marginal weight distribution. With this choice of prior, our final loss function becomes
\begin{equation}
\label{eq:general-loss}
\L(q(w|\D)) = H_{p,q}(\y|\x,w) + \beta \KL{q(w|\D)}{\tilde{q}(w)}
\end{equation}
for some fixed distribution $\tilde{q}$ that approximates the real marginal distribution $q(w)$.
The IB Lagrangian for the weights in \cref{eq:variational-regularizer} can be seen as a generally intractable special case of \cref{eq:general-loss} that gives the sharpest upper-bound to our desired loss in this family of losses.

In the following, to keep the notation uncluttered, we will denote our upper bound $\KL{q(w|\D)}{\tilde{q}(w)}$ to the mutual information $I(w;\D)$ simply by $\tilde{I}(w;\D)$, where
\[
\tilde{I}(w;\D) := \KL{q(w|\D)}{\tilde{q}(w)} = \KL{q(w|\D)}{\textstyle \prod_i q(w_i)}.
\]

\subsection{Bounding the information in the weights of a network}
\label{sec:information-bound}

To derive precise and empirically verifiable statements about $\tilde{I}(w;\D)$, we need a setting where this can be expressed analytically and optimized efficiently on standard architectures. 
To this end, following \cite{kingma2015variational}, we make the following modeling choices.

\paragraph{Modeling assumptions.} Let $w$ denote the vector containing all the parameters (weights) in the network,
and let $W^k$ denote the weight matrix at layer $k$.
We assume an improper log-uniform prior on $w$,
that is $\tilde{q}(w_i) = c/|w_i|$. Notice that this is the only scale-invariant prior \citep{kingma2015variational}, and closely matches the real marginal distributions of the weights in a trained network \citep{achille2016information};  we parametrize the weight distribution $q(w_i|\D)$ during training as
\[
w_i|\D \sim \epsilon_i \hat{w}_i,
\]
where $\hat{w}_i$ is a learned mean, and  $\epsilon_i \sim \log \N (-\alpha_i/2, \alpha_i)$ is i.i.d. multiplicative log-normal noise with mean 1 and variance $\exp(\alpha_i) - 1$.%
\footnote{For a log-normal  $\log \N(\mu, \sigma^2)$ mean and variance are respectively 
$\exp(\mu + \sigma^2/2)$ and $[\exp(\sigma^2)-1]\exp(2\mu + \sigma^2)$.
}
Note that while \cite{kingma2015variational} uses this parametrization as a local approximation of the Bayesian posterior for a given (log-uniform) prior, we rather \emph{define} the distribution of the weights $w$ after training on the dataset $\D$ to be $q(w|\D)$.

\begin{prop}[Information in the weights, \Cref{prop:information-weight-proof}]
Under the previous modeling assumptions, the upper-bound to the information that the weights contain about the dataset is 
\[
{I(w;\D)} \leq \tilde{I}(w;\D) = - \half \sum_{i=1}^{\dim(w)} \log \alpha_i + C,
\]
where the constant $C$ is arbitrary due to the improper prior.
\end{prop}

\begin{rmk}[On the constant $C$]
\label{rmk:constant-C}
To simplify the exposition, since the optimization is unaffected by any additive constant, in the following we abuse the notation and, under the modeling assumptions stated above, we rather \emph{define}
$\tilde{I}(w;\D) := - \half \sum_{i=1}^{\dim(w)} \log \alpha_i $. \citet{neklyudov2017structured} also suggest a principled way of dealing with the arbitrary constant by using a proper log-uniform prior.
\end{rmk}
Note that computing and optimizing this upper-bound to the information in the weights is relatively simple and efficient using the reparametrization trick of \cite{kingma2015variational}.

\subsection{Flat minima have low information}

Thus far we have suggested that adding the explicit information regularizer $I(w; \D)$ prevents the network from memorizing the dataset and thus avoid overfitting, which we also confirm empirically in \Cref{sec:expm}.
However, real networks are not commonly trained with this regularizer, thus seemingly undermining the theory.
However, even when not explicitly present, the term $I(w;\D)$ is implicit in the use of SGD. In particular, \citet{chaudhari2018cycles} show that, under certain conditions, SGD introduces an entropic bias of  a very similar form to the information in the weights described thus far, where the amount of information can be controlled by the learning rate and the size of mini-batches.

Additional indirect empirical evidence is provided by the fact that some variants of SGD \citep{chaudhari2016entropy} bias the optimization toward ``flat minima'', that are local minima whose Hessian has mostly small eigenvalues.
These minima can be interpreted exactly as having low information $I(w;\D)$, as  suggested early on by \citet{hochreiter1997flat}: {Intuitively, since the loss landscape is locally flat, the weights may be stored at lower precision without incurring in excessive inference error.
As a consequence of previous claims, we can then see flat minima as having better generalization properties and, as we will see in \Cref{sec:duality}, the associated representation of the data is more insensitive to nuisances and more disentangled.}
For completeness, here we derive a more precise relationship between flatness (measured by the nuclear norm of the loss Hessian), and the information content based on our model.
\begin{prop}[Flat minima have low information, Appendix \ref{prop:flat-minima-proof}]
\label{prop:flat-minima}
Let $\hat w$ be a local minimum of the cross-entropy loss $H_{p,q}(\y|\x,w)$, and let $\H$ be the Hessian at that point.
Then, for the optimal choice of the
posterior $w|\D = \epsilon \odot \hat w$ centered at $\hat w$ that optimizes the IB Lagrangian, we have
\[
{I(w;\D)} \leq\tilde{I}(w;\D) \leq \half K [\log \norm{\hat{w}}_2^2 + \log \norm{\H}_* - K\log (K^2 \beta/2)]
\]
where $K = \dim(w)$ and $\| \cdot \|_*$ denotes the nuclear norm.
\end{prop}

Notice that a converse inequality, that is, low information implies flatness,
needs not hold, so there is no contradiction with the results of \citet{dinh2017sharp}.
Also note that for $\tilde{I}(w;\D)$ to be invariant to reparametrization one has to consider the constant $C$, which we have ignored (Remark \ref{rmk:constant-C}). The connection between flatness and overfitting has also been studied by \cite{neyshabur2017exploring}, including the effect of the number of parameters in the model.

In the next section, we prove one of our main results, that networks with low information in the weights realize invariant and disentangled representations. Therefore, invariance and disentanglement emerge naturally when training a network with implicit (SGD) or explicit (IB Lagrangian) regularization, and are related to flat minima.

\section{Duality of the Bottleneck}
\label{sec:duality}

The following proposition gives the fundamental link in our model between information in the weights, and hence flatness of the local minima, minimality of the representation, and disentanglement.
\begin{prop}[Appendix \ref{prop:information-exact-computation}]
Let $z=Wx$, and assume as  before $W = \epsilon \odot \hat{W}$, with 
$\epsilon_{i,j} \sim  \log \N(-\alpha_i/2, \alpha_i)$.
Further assume that the marginals of $p(z)$ and $p(z|x)$ are both approximately Gaussian
(which is reasonable for large $\dim(x)$ by the Central Limit Theorem). Then,
\begin{equation}
I(z;x) + \TC(z) =
 - \half \sum_{i=1}^{\dim(z)} \E_x  \log \frac{\tilde{\alpha}_i \hat{W}_i^2 \cdot  x^2}{\hat{W}_i \cdot \Cov(x) \hat{W}_i  + \tilde{\alpha}_i \hat{W}_i^2 \cdot \E (x^2)},
\end{equation}
where $W_i$ denotes the $i$-th row of the matrix $W$, and $\tilde{\alpha_i}$ is the noise variance $\tilde{\alpha}_i = \exp(\alpha_i) - 1$. In particular, $I(z;x) + \TC(z)$ is a monotone decreasing function of the weight variances $\alpha_i$.
\end{prop}

The above identity is difficult to apply in practice,
but with some additional hypotheses, we can derive a cleaner uniform tight bound on $I(z;x) + \TC(z)$.

\begin{prop}[Uniform bound for one layer, Appendix \ref{prop:uniform-bound-proof}]
\label{cor:single-layer-bound}
Let $z=Wx$, where $W = \epsilon \odot \hat{W}$, where $\epsilon_{i,j} \sim \log \N(-\alpha/2,\alpha)$; assume that the components of $x$ are uncorrelated, and that their kurtosis is uniformly bounded.%
\footnote{
    This is a technical hypothesis, always satisfied if the components $x_i$ are IID, {(sub-)}Gaussian, or with uniformly bounded support.
}
Then, there is a strictly increasing function $g(\alpha)$ s.t. we have the uniform bound
\[
g(\alpha) \leq \frac{I(x;z) + TC(z)}{\dim(z)} \leq g(\alpha) + c,
\]
where $c=O(1/\dim(x))\leq 1$,
$g(\alpha)= - \log\,(1-e^{-\alpha})/2$
and $\alpha$ is related to $\tilde{I}(w;\D)$ by  $\alpha = \exp\set{-I(W;\D)/\dim(W)}$.
In particular, $I(x;z) + TC(z)$ is tightly bounded by $\tilde{I}(W;\D)$ and increases strictly with it.
\end{prop}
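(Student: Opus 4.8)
The plan is to reduce $I(x;z)+\TC(z)$ to a sum of per-component mutual informations and then control each term uniformly. Because the multiplicative noise $\epsilon_{ij}$ is independent across both $i$ and $j$, the components $z_i$ are conditionally independent given $x$, so $H(z\mid x)=\sum_i H(z_i\mid x)$; combined with the definition of total correlation this gives $I(x;z)+\TC(z)=\sum_i H(z_i)-H(z\mid x)=\sum_i I(z_i;x)$. The preceding proposition already evaluates this sum in closed form under the Gaussian approximation, so I would start directly from it and specialize to the uniform case $\alpha_i\equiv\alpha$, to a diagonal covariance for $x$ (uncorrelated components), and to zero-mean $x_j$ (after centering). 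Writing $a_{ij}=\hat W_{ij}^2$ and $\sigma_j^2=\V[x_j]$, each summand then reads $I(z_i;x)=-\half\,\E_x\log\frac{\tilde\alpha\sum_j a_{ij}x_j^2}{(1+\tilde\alpha)\sum_j a_{ij}\sigma_j^2}$.

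The next step is to split each summand into a deterministic part and a fluctuation. Factoring out the average ratio $\frac{\tilde\alpha}{1+\tilde\alpha}=1-e^{-\alpha}$ leaves $I(z_i;x)=-\half\log(1-e^{-\alpha})-\half\,\E_x\log R_i$, where $R_i=\frac{\sum_j a_{ij}x_j^2}{\sum_j a_{ij}\sigma_j^2}$ is a convex combination of the standardized variables $x_j^2/\sigma_j^2$ and satisfies $\E_x R_i=1$. The first part is exactly the common constant $g(\alpha)$ (equal to $\tfrac12\log(1-e^{-\alpha})$ up to the sign convention), identical for every row $i$ because $\alpha$ is uniform, and hence it passes unchanged to the normalized average $\frac1{\dim(z)}\sum_i I(z_i;x)$; the second part is the residual I must show is small.

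For the two-sided control of the residual $-\half\E_x\log R_i$ I would use two elementary inequalities. Jensen's inequality gives $\E_x\log R_i\le\log\E_x R_i=0$, which yields one side (the residual is nonnegative) and therefore the tight constant $g(\alpha)$ there, uniformly in $i$. For the other side I would use $\log t\ge 1-1/t$, so that $-\E_x\log R_i\le\E_x[1/R_i]-1$, and then bound $\E_x[1/R_i]-1$. Uncorrelatedness lets me write $\V[R_i]=\sum_j w_j^2\,\V[x_j^2/\sigma_j^2]$ with weights $w_j=a_{ij}\sigma_j^2/\sum_k a_{ik}\sigma_k^2$ summing to one, and the uniformly bounded kurtosis bounds each $\V[x_j^2/\sigma_j^2]$ by a constant; since $\sum_j w_j^2=O(1/\dim(x))$ when the row weights are spread over $\Theta(\dim(x))$ coordinates, $R_i$ concentrates at $1$ with variance $O(1/\dim(x))$, giving $\E_x[1/R_i]-1=O(1/\dim(x))$ and hence the additive slack $c=O(1/\dim(x))\le 1$.

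The final step is monotonicity and aggregation: $g(\alpha)$ is strictly monotone in $\alpha$, and the modeling assumption ties $\alpha$ to the weight information by $\alpha=\exp\{-I(W;\D)/\dim(W)\}$, a strictly monotone relation, so composing shows $\frac1{\dim(z)}[I(x;z)+\TC(z)]$ is sandwiched in $[g(\alpha),g(\alpha)+c]$ and moves strictly with $\tilde I(W;\D)$. I expect the main obstacle to be the rigorous control of the residual in the previous paragraph: because $\log$ and $1/t$ are unbounded as $R_i\to0$, a naive Taylor expansion of $\E_x\log R_i$ around the mean is not valid, and the bounded-kurtosis hypothesis must be used through a genuine concentration estimate to show that the region where $R_i$ is small contributes only $O(1/\dim(x))$ to $\E_x[1/R_i]$. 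This is precisely the role of the technical kurtosis assumption, and obtaining the clean uniform constant $c\le 1$ there is the crux of the argument.
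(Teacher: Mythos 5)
Your proposal follows essentially the same route as the paper's proof: both start from the closed-form expression of the preceding proposition, use the per-component decomposition $I(x;z)+\TC(z)=\sum_i I(z_i;x)$ to reduce to the case $\dim(z)=1$, split off the constant $\half\log(1+\tilde{\alpha}^{-1})=-\half\log(1-e^{-\alpha})$, and control the residual expectation of a logarithm through a variance computation that uses uncorrelatedness and bounded kurtosis, yielding the same $O(1/\dim(x))$ rate (with the same implicit assumption, in both proofs, that the row weights are spread over $\Theta(\dim(x))$ coordinates). Three differences are worth recording. First, your centering step is not actually without loss of generality: because the noise is multiplicative, the conditional variance of $z_i$ given $x$ is $\tilde{\alpha}\,\hat{W}_i^2\cdot x^2$, which involves \emph{uncentered} second moments, so $I(Wx;x)$ changes if $x$ is replaced by $x-\E[x]$. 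The paper avoids this by keeping general means and using the inequality $\hat{w}\cdot\Cov(x)\,\hat{w}\le \hat{w}^2\cdot\E[x^2]$, valid for uncorrelated components, which is why its residual ratio is normalized by $\E[x^2]$ rather than by variances. Second, you prove the lower bound $g(\alpha)\le \big(I(x;z)+\TC(z)\big)/\dim(z)$ via Jensen, a step the paper's proof omits entirely; note that this side genuinely needs the zero-mean hypothesis (with large means and small variances the mutual information falls below the claimed constant), so your centering, while not a free reduction, correctly identifies an assumption that the two-sided statement implicitly requires. Third, the obstacle you flag --- that a variance bound on $R_i$ does not by itself control $\E[\log R_i]$ or $\E[1/R_i]$, both being unbounded as $R_i\to 0$ --- is real, but it is equally a gap in the paper's own proof, which simply Taylor-expands the logarithm after the identical variance computation; indeed, bounded kurtosis alone does not exclude, for example, components with an atom at $0$, in which case $R_i=0$ with positive probability and the mutual information itself is infinite. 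So on this last point your proposal is no less rigorous than the paper's argument, and more candid about where the difficulty lies.
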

The above theorems tells us that whenever we decrease the information in the weights, either by explicit regularization, or by implicit regularization (\textit{e.g.}, using SGD), we automatically
improve the minimality, and hence, by Proposition \ref{prop:minimal-iff-invariant}, the invariance, and the disentanglement of the learner representation. In particular, we obtain as a corollary that SGD is biased toward learning invariant and disentangled representations of the data.
Using the Markov property of the layers, we can easily extend this bound to multiple layers:
\begin{cor}[Multi-layer case, Appendix \ref{cor:multi-layer-proof}]
\label{cor:multi-layer-bound}
Let $W^k$ for $k=1,...,L$ be weight matrices, with $W^k = \epsilon^k \odot \hat{W}^k$ and $\epsilon^k_{i,j}=\log \N(-\alpha^k/2,\alpha^k)$, and let $z_{i+1} = \phi(W^k z_k)$,
where $z_0 = x$ and $\phi$ is any nonlinearity. Then,
\[
I(z_L;x) \leq \min_{k<L} \set{\dim(z_k) \big[g(\alpha^k) + 1 \big] }
\]
where $\alpha^k = \exp\set{-I(W^k;\D)/\dim(W^k)}$.
\end{cor}

\begin{rmk}[Tightness]
\textnormal{
While the bound in Proposition \ref{cor:single-layer-bound} is tight, the bound in the multilayer case
needs not be. This is to be expected: Reducing the information in the weights creates a bottleneck,
but we do not know how much information about $x$ will actually go through this bottleneck.
Often, the final layers will let most of the information through, while initial layers will drop the most.
}
\end{rmk}

\begin{rmk}[Training-test transfer]
\textnormal{
We note that we did not make any (explicit) assumption about the test set having the same distribution of the training set. Instead, we make the less restrictive assumption of sufficiency: If the test distribution is entirely different from the training one -- one may not be able to achieve sufficiency. This prompts interesting questions about measuring the distance between {\em tasks} (as opposed to just distance between distributions), which will be studied in future work.
}
\end{rmk}

\section{Connection with PAC-Bayes bounds}
\label{sec:pac-bayes}

In this section we show that using a PAC-Bayes bound, we arrive at the same regularized loss function \cref{eq:general-loss} we obtained using the Information Bottleneck, without the need of any approximation. By Theorem 2 of \citet{mcallester2013pac}, we have that for any fixed $\lambda>1/2$,  prior $p(w)$, and any weight distribution $q(w|\D)$, the test error $L^{\text{test}}(q(w|\D))$ that the network commits using the weight distribution $q(w|\D)$ is upper-bounded in expectation by
\begin{equation}
\label{eq:pac-bayes-bound}
\E_\D [L^{\text{test}}(q(w|\D))] \leq \frac{1}{N(1-\frac{1}{2\lambda})} \Big(H_{p,q}(\y|\x,w) + \lambda L_{\text{max}} \E_\D[\KL{q(w|\D)}{p(w)}] \Big),
\end{equation}
where $L_{\text{\emph{max}}}$ is the maximum per-sample loss function, which for a classification problem we can assume to be upper-bounded, for example by clipping the cross-entropy loss at chance level.
Notice that right hand side coincides, modulo a multiplicative constant, with \cref{eq:completely-general-loss} that we derived as an approximation of the IB Lagrangian for the weights (\cref{eq:variational-regularizer}).

Now, recall that since we have
\begin{align*}
\E_\D [\KL{q(w|\D)}{q(w)}] &= \E_\D[\KL{q(w|\D)}{p(w)}] - \KL{q(w)}{p(w)}\\
&\leq \E_\D[\KL{q(w|\D)}{p(w)}],
\end{align*}
the sharpest PAC-Bayes upper-bound to the test error is obtained when $p(w) = q(w)$, in which case \cref{eq:pac-bayes-bound} reduces (modulo a multiplicative constant) to the IB Lagrangian of the weights. That is, the IB Lagrangian for the weights can be considered as a special case of PAC-Bayes giving the sharpest bound.

Unfortunately, as we noticed in \Cref{sec:minimal-weights}, the joint marginal $q(w)$ of the weights is not tractable. To circumvent the problem, we can instead consider that the sharpest PAC-Bayes upper-bound that can be obtained using a tractable factorized prior $p(w)$, which is obtained exactly when $p(w) = \tilde{q}(w) = \prod_i q(w_i)$ is the product of the marginals, leading again to our practical loss \cref{eq:general-loss}.

On a last note, recall that under our modeling assumptions the marginal $\tilde{q}(w)$ is assumed  to be an improper log-uniform distribution. While this has the advantage of being a non-informative prior that closely matches the real marginal of the weights of the network, it also has the disadvantage that it is only defined modulo an additive constant, therefore making the bound on the test error vacuous under our model.

The PAC-Bayes bounds has also been used by \citet{dziugaite2017computing} to study the generalization property of deep neural networks and their connection with the optimization algorithm. They use a Gaussian prior and posterior, leading to a non-vacuous generalization bound.

\section{Empirical validation}
\label{sec:expm}

\subsection{Transition from overfitting to underfitting}
As pointed out by  \cite{zhang2016understanding}, when a standard convolutional neural network (CNN) is trained on CIFAR-10 to fit random labels, the network is able to (over)fit them perfectly. This is easily explained in our framework: It means that the network is complex enough to memorize all the labels but, as we show here, it has to pay a steep price in terms of information complexity of the weights (\Cref{fig:IR2}) in order to do so. {On the other hand, when the information in the weights is bounded using and information regularizer, overfitting is prevented in a theoretically predictable way. 

In particular, in the case of completely random labels, we have $I(\y;w|\x,\theta) = I(\y;w) \leq I(w;\D)$, {where the first equality holds since $\y$ is by construction random, and therefore independent of $\x$ and $\theta$. In this case, the inequality used to derive \cref{eq:variational-regularizer} is an equality, and the IBL is an optimal regularizer, and, regardless of the dataset size $N$, for $\beta>1$ it should completely prevent memorization, while for $\beta<1$ overfitting is possible. To see this, notice that since the labels are random, to decrease the classification error by $\log |\mathcal{Y}|$, where $|\mathcal{Y}|$ is the number of possible classes, we need to memorize a new label. But to do so, we need to store more information in the weights of the network, therefore increasing the second term $I(w;\D)$ by a corresponding quantity. This trade-off is always favorable when $\beta<1$, but it is not when $\beta>1$. Therefore, the theoretically the optimal solution to $\cref{eq:IBL}$ is to memorize all the labels in the first case, and not memorize anything in the latter. 

As discussed, for real neural networks we cannot directly minimize $\cref{eq:IBL}$, and we need to use a computable upper bound to $I(w;\D)$ instead (\Cref{sec:information-bound}). Even so, the empirical behavior of the network, shown in \Cref{fig:phase-transiction}, closely follows this prediction, and for various sizes of the dataset clearly shows a phase transition between overfitting and underfitting near the critical value $\beta=1$. }
Notice instead that for real labels the situation is different: The model is still able to overfit when $\beta<1$, but importantly there is a large interval of $\beta>1$ where the model can fit the data \emph{without} overfitting to it. Indeed, as soon as $\beta N \propto I(w;\D)$ is larger than the constant $H(\theta)$, the model trained on real data fits real labels without excessive overfitting (\Cref{fig:phase-transiction}). 

{Notice that, based on this reasoning, we expect the presence of a phase transition between an overfitting and an underfitting regime at the critical value $\beta=1$ to be largely independent on the network architecture: To verify this, we train different architectures on a subset of 10000 samples from CIFAR-10 with random labels. As we can see on the left plot of \Cref{fig:IR2}, even very different architectures show a phase transition at a similar value of $\beta$. We also notice that in the experiment ResNets has a sharp transition close to the critical $\beta$. }

In the right plot of \Cref{fig:IR2} we measure the quantity information in the weights for different levels of corruption of the labels. To do this, we fix $\beta<1$ so that the network is able to overfit, and for various level of corruption we train until convergence, and then compute $I(w;\D)$ for the trained model.
As expected, increasing the randomness of the labels increases the quantity of information we need to fit the dataset. For completely random labels, $I(w;\D)$ increases by $\sim 3$ nats/sample, which the same order of magnitude as the quantity required to memorize a 10-class labels ($2.30$ nats/sample), as shown in \Cref{fig:IR2}.

\begin{figure}
\centering
\begin{subfigure}{.45\linewidth}
\centering
\includegraphics[height=4.0cm]{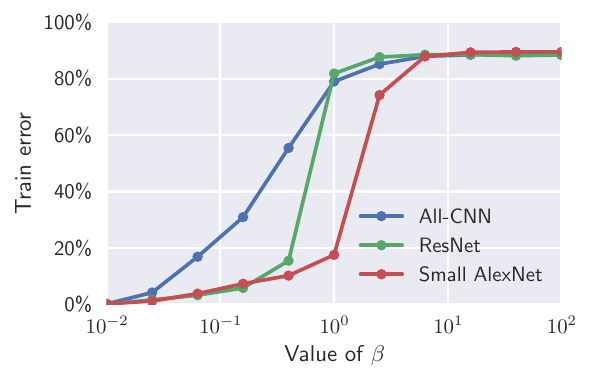}
\end{subfigure}
\begin{subfigure}{.45\linewidth}
\centering
\includegraphics[height=4.1cm,width=6cm]{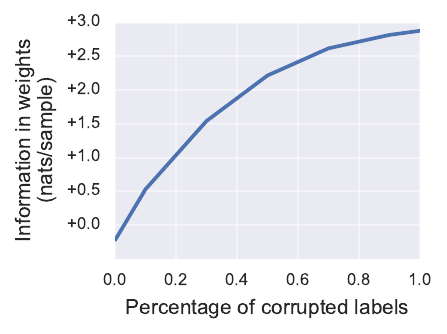}
\end{subfigure}
\vspace*{-.5em}
\caption{
\label{fig:IR2}
{\textbf{(Left)} Plot of the training error on CIFAR-10 with random labels as a function of the parameter $\beta$ for different models (see the appendix for details). As expected, all models show a sharp phase transition from complete overfitting to underfitting before the critical value $\beta=1$.  }
\textbf{(Right)} We measure the quantity of information in the weights necessary to overfit as we vary the percentage of corrupted labels under the same settings of \Cref{fig:phase-transiction}. To fit increasingly random labels, the network needs to memorize more information in the weights; the increase needed to fit entirely random labels is about the same magnitude as the size of a label (2.30 nats/sample).}
\end{figure}

\subsection{Bias-variance trade-off}

The Bias-Variance trade-off is sometimes informally stated as saying that low-complexity models tend to underfit the data, while excessively complex models may instead overfit, so that one should select an adequate intermediate complexity. This is apparently at odds with the common practice in Deep Learning, where increasing the depth or the number of weights of the network, and hence increasing the ``complexity'' of the model measured by the number of parameters, does not seem to induce overfitting. Consequently, a number of alternative measures of complexity have been proposed that capture the intuitive bias-variance trade-off curve, such as different norms of the weights \citep{neyshabur2015path}.

From the discussion above, we have seen that the quantity of information in the weights, or alternatively its computable upperbound $\tilde{I}(w;\D)$, also provides a natural choice to measure model complexity in relation to overfitting. In particular, we have already seen that models need to store increasingly more information to fit increasingly random labels (\Cref{fig:IR2}). In \Cref{fig:bias-variance} we show that by controlling $\tilde{I}(w;\D)$, which can be done easily by modulating $\beta$, we recover the right trend for the bias-variance tradeoff, whereas models with too little information tend to underfit, while models memorizing too much information tend to overfit.

\begin{figure}
\centering
\includegraphics[height=4.2cm]{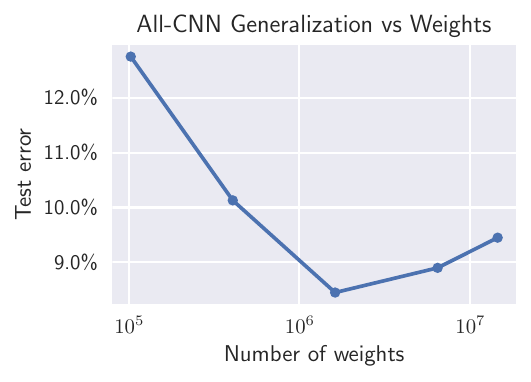}
\hspace{1cm}
\includegraphics[height=4.2cm]{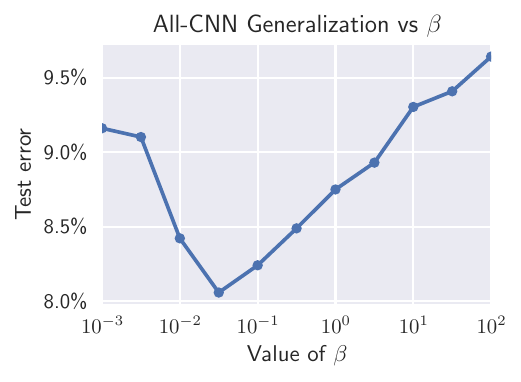}
\caption{
\label{fig:bias-variance}
{Plots of the test error obtained training the All-CNN architecture on CIFAR-10 (no data augmentation). \textbf{(Left)} Test error as we increase the number of weights in the network using weight decay but without any additional explicit regularization. Notice that increasing the number of weights the generalization error plateaus rather than increasing. \textbf{(Right) } Changing the value of $\beta$, which controls the amount of information in the weights, we obtain the characteristic curve of the bias-variance trade-off. This suggests that the quantity of information in the weights correlates well with generalization.}
}
\end{figure}

\subsection{Nuisance invariance}

Corollary \ref{cor:multi-layer-bound} shows that by decreasing the information in the weights $I(w;\D)$,
which can be done for example using \cref{eq:variational-regularizer},
the learned representation will be increasingly minimal,
and therefore insensitive to nuisance factors $n$,
as measured by $I(z;n)$.
Here, we adapt a technique from the GAN literature \cite{sonderby2016amortised} that allows us to explicitly measure
$I(z;n)$ and validate this effect, provided we can sample from the nuisance distribution $p(n)$ and from $p(x|n)$; that is, if given a nuisance $n$ we can generate data $x$ affected by that nuisance. Recall that by definition we have
\begin{align*}
I(z;n) &= \E_{n\sim p(n)} \KL{p(z|n)}{p(z)} \\
&= \E_{n\sim p(n)} \E_{z\sim p(z|n)} \log [p(z|n)/p(z)].
\end{align*}
To approximate the expectations via sampling
we need a way to approximate
the likelihood ratio $\log p(\z|\n)/p(\z)$.
This can be done as follows:
Let $D(z; n)$ be a binary discriminator that given the representation $z$ and the nuisance $n$ tries to decide whether $z$ is sampled from the posterior distribution $p(z|n)$ or from the prior $p(z)$. Since by hypothesis we can generate samples from both distributions, we can generate data to train this discriminator. Intuitively, if the discriminator is not able to classify, it means that $z$ is insensitive to changes of $n$. Precisely, since the optimal discriminator is
\[
D^*(z;n) = \frac{p(z)}{p(z) + p(z|n)},
\]
if we assume that $D$ is close to the optimal discriminator $D^*$, we have 
\[
 \log \frac{p(z|n)}{p(z)} = \log \frac{1-D^*(z;n)}{D^*(z;n)}
 \simeq \log \frac{1-D(z;n)}{D(z;n)}.
\]
therefore we can use $D$ to estimate the log-likelihood ratio, and so also the mutual information $I(z;n)$.
Notice however that this comes with no guarantees on the quality of the approximation.

To test this algorithm, we add random occlusion nuisances to MNIST digits (\Cref{fig:nuisances}). In this case, the nuisance $n$ is the occlusion pattern, while the observed data $x$ is the occluded digit.
For various values of $\beta$, we train a classifier on this data in order to learn a representation $z$,
and, for each representation obtained this way, we train a discriminator as described above and we compute the resulting approximation of $I(z;n)$.
The results in \Cref{fig:nuisances} show that decreasing the information in the weights makes the representation increasingly more insensitive to $n$.

\begin{figure}
\centering
\begin{subfigure}{1.2cm}
\centering
\includegraphics[height=.8cm]{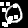} \\
\includegraphics[height=.8cm]{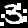} \\
\includegraphics[height=.8cm]{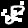}

\vspace{.6cm}
\end{subfigure}
\begin{subfigure}{6.5cm}
\centering
\includegraphics[height=4cm]{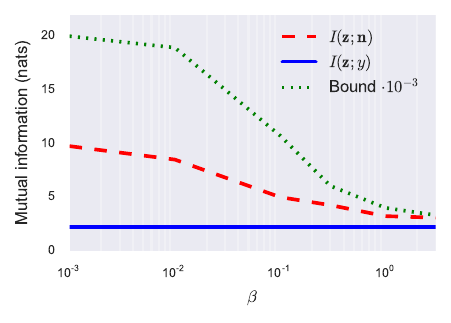}
\end{subfigure}

\vspace*{-.7em}

\caption{
\label{fig:nuisances}
\textbf{(Left)} A few training samples generated adding nuisance clutter $n$ to the MNIST dataset.
\textbf{(Right)} Reducing the information in the weights makes the representation $z$ learned by the digit classifier increasingly invariant to nuisances ($I(n;z)$ decreases), while sufficiency is retained ($I(z; y) = I(x; y)$ is constant). As expected, $I(z;n)$ is smaller but has a similar behavior to the theoretical bound 
in \Cref{cor:multi-layer-bound}.
}
\end{figure}

\section{Discussion and conclusion}

In this work, we have presented bounds, some of which are tight, that connect the amount of information in the weights, the amount of information in the activations, the invariance property of the network, and the geometry of the residual loss. These results leverage the structure of deep networks, in particular the multiplicative action of the weights, and the Markov property of the layers. This leads to the  surprising result 
that reducing  information stored in the weights about the past (dataset) results in desirable properties of the learned internal representation of the test datum (future).

Our notion of representation is intrinsically stochastic. This simplifies the computation as well as the derivation of information-based relations. However, note that even if we start with a deterministic representation $w$, Proposition \ref{prop:flat-minima} gives us a way of converting it to a stochastic representation whose quality depends on the flatness of the minimum. Our theory uses, but does not depend on, the Information Bottleneck Principle, which dates back to over two decades ago, and can be re-derived in a different frameworks, for instance PAC-Bayes, which yield the same results and additional bounds on the test error.

This work focuses on the inference and learning of optimal representations, that seek to get the most out of the data we have for a specific task. This does not guarantee a good outcome since, due to the Data Processing Inequality, the representation can be easier to use but ultimately no more informative than the data themselves. An orthogonal but equally interesting issue is how to get the most informative data possible, which is the subject of active learning, experiment design, and perceptual exploration. Our work does not address transfer learning, where a representation trained to be optimal for a task is instead used for a different task, which will be subject of future investigations.

\subsubsection*{Acknowledgments}
Supported by ONR N00014-17-1-2072, ARO W911NF-17-1-0304, AFOSR FA9550-15-1-0229 and FA8650-11-1-7156. We wish to thank our reviewers and David McAllester, Kevin Murphy, Alessandro Chiuso for the many insightful comments and suggestions.

\nocite{neklyudov2017structured}
\nocite{molchanov2017variational}
\nocite{lecun1998gradient}
\nocite{radford2015unsupervised}
\nocite{berberian1988}
\nocite{springenberg2014striving}
\nocite{clevert2015fast}

\bibliography{bibliography}

\clearpage

\appendix

\section{Details of the experiments}

\subsection{Random labels}

We use a similar experimental setup as \cite{zhang2016understanding}. In particular, we train a small version of AlexNet on a 28$\times$28 central crop of CIFAR-10 with completely random labels. The dataset is normalized using the global channel-wise mean and variance, but no additional data augmentation is performed. The exact structure of the network is in \Cref{table:networks}. As common in practice we use batch normalization before all the ReLU nonlinearities, except for the first layer. Optimization of the IB Lagrangian loss function is performed similarly to \cite{kingma2015variational} and \cite{molchanov2017variational}. We found that constraining the variance $\alpha_i$ of the weights to be the same for all weights in the same filter helps stabilizing the training process. We train with learning rates $\eta \in \{0.02,0.005\}$ and select the best performing network of the two. Generally, we found that a higher learning rate is needed to overfit when the number of training samples $N$ is small, while a lower learning rate is needed for larger $N$. We train with SGD with momentum $0.9$ for $360$ epochs reducing the learning rate by a factor of $10$ every $140$ epochs. We use a large batch-size of $500$ to minimize the noise coming from SGD. No weight decay or other regularization methods are used.

The final plot is obtained by triangulating the convex envelope of the data points, and by interpolating their value on the resulting simplexes. Outside of the convex envelope (where the accuracy is mostly constant), the value was obtained by inpainting.

To measure the information content of the weights as the percentage of corrupted labels varies, we fix $\beta=0.1$, $N=30000$ and $\eta=0.005$ and train the network on different corruption levels with the same settings as before.

To test the phase transition on multiple architectures, we train the Small AlexNet, the AllCNN network and a ResNet (see \Cref{table:networks}). For all architectures, we train with $N=10000$ random labels, $\eta=0.05$ and different values of $\beta$ log-uniformly spaced in $[10^{-2}, 10^2]$.

\subsection{Bias-variance trade-off}

For this experiment we train the AllCNN architecture (\Cref{table:networks}) on the CIFAR-10 dataset with ZCA whitening \cite{krizhevsky2009learning} and without any additional data augmentation. First, we train a standard network and change the number of filters (we multiplying the number of filters of all layers by the same constant) and train with $\eta=0.05$, batch size $128$, weight decay $0.001$. Then, we use the standard number of layers and train instead with the IBL loss function with different values of $\beta$.

\subsection{Nuisance invariance}

The cluttered MNIST dataset is generated by adding ten $4\times4$ squares uniformly at random on the digits of the MNIST dataset \citep{lecun1998gradient}.
For each level of $\beta$, we train the classifier in \Cref{table:networks} on this dataset. The weights of all layers, excluding the first and last one, are threated as a random variable with multiplicative Gaussian noise (\Cref{sec:gaussian-noise}) and optimized using the local reparameterization trick of \cite{kingma2015variational}. We use the last convolutional layer before classification as representation $z$.

The discriminator network used to estimate the log-likelihood ratio is constructed as follows: the inputs are the nuisance pattern $n$, which is a $28{\times}28{\times}1$ image containing 10 random occluding squares, and the 7$\times$7$\times$192 representation $z$ obtained from the classifier.
First we preprocess $n$ using the following network: \texttt{conv 48 $\to$ conv 48 $\to$ conv 96 s2 $\to$ conv 96 $\to$ conv 96 $\to$ conv 96 s2}, where each conv block
is a 3$\times$3 convolution followed by batch normalization and ReLU. Then, we concatenate the $7{\times}7{\times}96$ result with $z$ along the feature maps, and the final discriminator output is obtained by applying the following network: \texttt{conv 192 $\to$ conv 192 $\to$ conv 1$\times$1$\times$192 $\to$ conv 1{$\times$}1{$\times$}1 $\to$ AvgPooling 7$\times$7 $\to$ sigmoid}.

\begin{table}
\small
\centering
\vspace{0pt}
\begin{tabular}{|c|}
    \hline
    Input 32x32 \\
    \hline 
    conv 64 \\
    ReLU \\
    \hline
    MaxPool 2x2 \\
    \hline
    conv 64 + BN\\
    ReLU \\
    \hline
    MaxPool 2x2 \\
    \hline
    FC 3136x384 + BN \\
    ReLU \\
    \hline
    FC 384x192 + BN\\
    ReLU \\
    \hline
    FC 192x10 \\
    \hline
    softmax \\
    \hline
\end{tabular}
\hspace{.03\linewidth}
\begin{tabular}{|c|}
    \hline
    Input 28x28 \\
    \hline 
    conv 96 + BN + ReLU\\
    conv 96 + BN + ReLU\\
    \hline
    conv 192 s2 + BN + ReLU \\
    \hline
    conv 192 + BN + ReLU\\
    conv 192 + BN + ReLU\\
    \hline
    conv 192 s2 + BN + ReLU \\
    \hline
    conv 192 + BN + ReLU\\
    conv 192 + BN + ReLU\\
    \hline
    conv 1x1x10\\
    Average pooling 7x7 \\
    \hline
    softmax \\
    \hline
\end{tabular}
\hspace{.03\linewidth}
\begin{tabular}{|c|}
    \hline
    Input 28x28 \\
    \hline 
    conv 64\\
    \hline
    block 64 s1\\
    \hline
    block 128 s2\\
    \hline
    block 256 s3\\
    \hline
    block 512 s3\\
    \hline
    Average pooling 4x4 \\
    linear 10\\
    \hline
    softmax \\
    \hline
\end{tabular}
\caption{\label{table:networks}
\textbf{(Left)} The Small AlexNet model used in the random label experiment, adapted from \cite{zhang2016understanding}. All convolutions have a $5{\times}5$ kernel.
The use of batch normalization makes the training procedure more stable, but did not significantly change the results of the experiments.
\textbf{(Center)}
All Convolutional Network \citep{springenberg2014striving} used as a classifier in the experiments. All convolutions but the last one use a $3{\times}3$ kernel, ``s2'' denotes a convolution with stride 2. The final representation we use are the activations  of the last ``conv 192'' layer.
\textbf{(Right)} The ResNet architecture \citep{he2016deep} on which we test the phase transition. Each block with \texttt{f} filters and stride $s$ is structured as follows: \texttt{BN -> ReLU -> conv f stride s -> BN -> ReLU -> conv f} with a skip connection between first ReLU and the output.
}
\end{table}

\subsection{Visualizing the representation}

Even when we cannot generate data affected by nuisances like in the previous section, we can still visualize the information content of $z$ to 
learn what nuisances are discarded in the representation. To this end, given a representation $z$, we want to learn to sample from a distribution
$q(\hat{x}|z)$ of images that are maximally likely to have $z$ as their representation. Formally, this means that we want a distribution $q(\hat{x}|z)$ that maximizes the amortized maximum a posteriori estimate of $z$:
\begin{align*}
\E_z \E_{\hat{x} \sim q(\hat{x}|z)}[\log p(\hat{x}|z)] =&
\E_{z} \underbrace{\E_{\hat{x} \sim q(\hat{x}|z)} [ \log p(z|\hat{x})]}_{\text{Reconstruction error}} + \underbrace{\E_{\hat{x} \sim q(\hat{x})} [\log p(\hat{x})]}_{\text{Distance from prior}} + C.
\end{align*}
Unfortunately, the term $p(\hat{x})$ in the expression is difficult to estimate. However, \cite{sonderby2016amortised} notice that the modified gain function
\begin{multline*}
\E_z \E_{\hat{x} \sim q(\hat{x}|z)}[\log p(\hat{x}|z)] + H(p(\hat{x})) =
\E_{z} \E_{\hat{x} \sim q(\hat{x}|z)} [ \log p(z|\hat{x})] - 
\KL{q(\hat{x})}{p(\hat{x})} + C,
\end{multline*}
differs from the amortizes MAP only by a term $H(p(\hx))$, which has the positive effect of improving the exploration of the reconstruction, and contains the term $\KL{q(\hat{x})}{p(\hat{x})}$, which can be estimated  easily using the discriminator network of a GAN \cite{sonderby2016amortised}. To maximize this gain, we can simply train a GAN with an additional reconstruction loss $-\log p(z|\hat{x})$.

To test this algorithm, we train a representation $z$ to classify the 40 binary attributes in the CelebA face dataset \citep{yang2015facial}, and then use the above loss function to train a GAN network to reconstruct an input image $\hat{x}$ from the representation $z$. The results in \Cref{fig:faces} show that, as expected, increasing the value of $\beta$, and therefore reducing $I(w;\D)$, generates samples that have increasingly more random backgrounds and hair style (nuisances), while retaining facial features. In other words, the representation $z$ is increasingly insensitive to nuisances affecting the data, while information pertaining the task is retained in the reconstruction $\hat{x}$.

\begin{figure*}
\centering
\includegraphics[width=.44\linewidth]{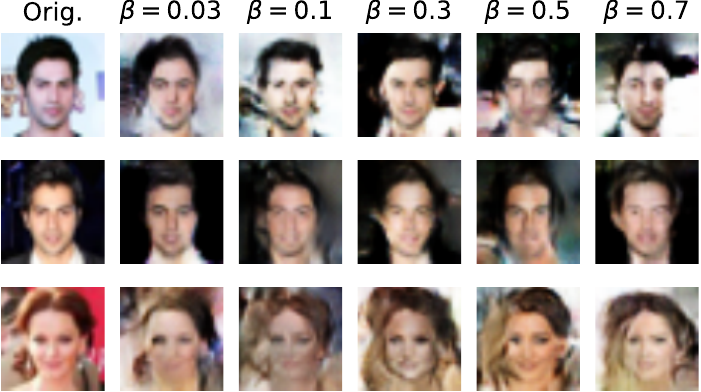}
\hspace{.03\linewidth}
\includegraphics[width=.44\linewidth]{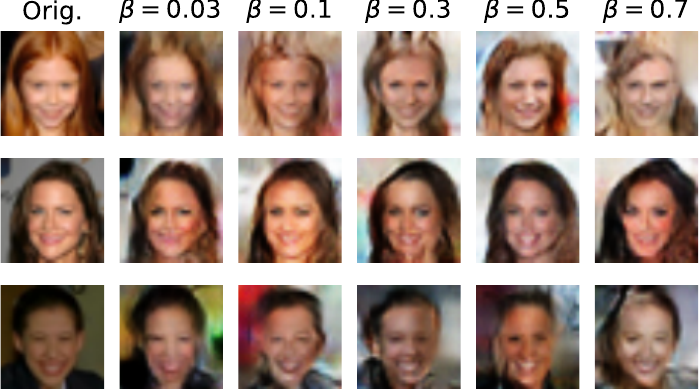}
\caption{
\label{fig:faces}
For different values of $\beta$, we show the image $\hat{x}$ reconstructed from a representation $z \sim  p(z|x)$ of the original image $x$ in the first column. For small $\beta$, $z$ contains more information regarding $x$, thus the reconstructed image $\hat{x}$ is close to $x$, background included. Increasing $\beta$ decreases the information in the weighs, thus the representation $z$ becomes more invariant to nuisances: Reconstructed image matches important details in $x$ that are preserved in $z$ (\emph{i.e.}, hair color, sex, expression), but background, hair style, and other nuisances are generated anew.
}
\end{figure*}

More precisely, we first train a classifier on the images from the CelebA datasets resized to 32$\times$32, where the task is to recover the 40 binary attributes associated to each image. The classifier network is the same as the one in \Cref{table:networks} with the following modifications: we use Exponential Linear Units \citep{clevert2015fast} for the activations, instead of ReLU, since invertible activations generally perform better when training a GAN, and we divide by two the number of output filters in all layers to reduce the training time. A sigmoid nonlinearity is applied to the final 40-way output of the network.

To generate the image $\hat{x}$ given the 8$\times$8$\times$96 representation $z$ computed by the classifier, we use a similar structure to DCGAN \citep{radford2015unsupervised}, namely \texttt{$z$ $\to$ conv 256 $\to$ ConvT 256s2 $\to$ ConvT 128s2 $\to$ conv 3 $\to$ tanh}, where \texttt{ConvT 256s2} denotes a transpose convolution with 256 feature maps  and stride 2. All convolutions have a batch normalization layer before the activations.
Finally, the discriminator network is given by \texttt{$\hat{x}$ $\to$ conv 64s2 $\to$ conv 128s2 $\to$ ConvT 256s2 $\to$ conv 1 $\to$ sigmoid}. Here, all convolutions use batch normalization followed by Leacky ReLU activations.

In this experiment, we use Gaussian multiplicative noise which is slightly more stable during training (\Cref{sec:gaussian-noise}). To stabilize the training of the GAN, we found useful to (1) scale down the ``reconstruction error'' term in the loss function and (2) slowly increase the weight of the reconstruction error up to the desired value during training.

\section{Gaussian multiplicative noise}
\label{sec:gaussian-noise}

In developing the theory, we chose to use log-normal multiplicative noise for the weights: The main benefit is that with this choice
the information in the weights $I(w;\D)$ can be expressed in closed form, up to an arbitrary constant $C$ which does not matter during the optimization process (but see also \cite{neklyudov2017structured} for a principled approach to this problem that uses a proper log-uniform prior).
Another possibility, suggested by \citet{kingma2015variational} is to use Gaussian multiplicative noise with mean 1.
Unfortunately, there is no analytical expression for $I(w;\D)$ when using Gaussian noise, but $I(w;\D)$ can still be approximated numerically  with high precision \citep{molchanov2017variational}, and it makes the training process slightly more stable. The theory holds with minimal changes also in this case, and we use this choice in some experiments.

\section{Proofs of theorems}
\label{sec:proofs}

\begin{lem}[Task-nuisance decomposition]
\label{lemma:task-nuisance-proof}
Given a joint distribution $p(x,y)$, where $y$ a discrete random variable, we can always find a random variable $n$ independent of $y$ such that $x=f(y,n)$, for some deterministic function $f$.
\end{lem}
\begin{proof}
Fix $n \sim \Unifdist(0,1)$ to be the uniform 
distribution on $[0,1]$.
We claim that, for a fixed value of $y$, there is a function $\Phi_y(n)$
such that $x|y = {\Phi_y}_* (n)$, where $({\cdot})_*$ denotes the push-forward map of measures.
Given the claim, let $\Phi(y,n) = (y,\Phi_y(n))$.
Since $y$ is a discrete random variable,
$\Phi(y,n)$ is easily seen to be a measurable function and by construction $(x,y) \sim \Phi_*(y,n)$. To see the claim, notice that, since there exists a measurable isomorphism between $\R^n$ and $\R$ (Theorem 3.1.1 of \citet{berberian1988}),
we can assume without loss of generality that $x \in \R$.
In this case, by definition, we can take $\Phi_y(n) = F_y^{-1}(n)$ where $F_y(t) = \P[x<t\,|\,y]$ is the cumulative distribution function of $p(x|y)$.
\end{proof}

\begin{prop}[Invariance and  minimality]
\label{prop:invariance-minimality-proof}
Let $n$ be a nuisance for the task $y$ and let $z$ be a sufficient representation of the input $x$. Suppose that 
$z$ depends on $n$ only through $x$ (\emph{i.e.,}  $n\to x\to z$). Then,
\[
I(z;n) \leq I(z;x) - I(x;y).
\]
Moreover, there exists a nuisance $n$ such that
equality holds up to a (generally small) residual $\epsilon$
\[
I(z;n) = I(z;x) - I(x;y) - \epsilon,
\]
where  $\epsilon := I(z;y|n) - I(x;y)$. In particular
$0 \leq \epsilon \leq H(y|x) $, and $\epsilon=0$ whenever $y$ is a deterministic function of $x$.
 Under these conditions, a sufficient statistic $z$ is invariant (maximally insensitive) to nuisances if and only if it is minimal.
\end{prop}
\begin{proof}
By hypothesis, we have the Markov chain $(y,n) \to x \to z$; therefore, by the DPI, we have $I(z;y,n) \leq I(z;x)$.
The first term can be rewritten using the chain rule 
as $I(z;y,n) = I(z;n) + I(z;y|n)$, giving us
\[
I(z;n) \leq I(z;x) - I(z;y|n).
\]
Now, since $y$ and $n$ are independent, $I(z;y|n) \geq I(z;y)$. In fact,
\begin{align*}
I(z;y|n) &= H(y|n) - H(y|z,n) \\
&= H(y) - H(y|z,n) \\
&\geq H(y) - H(y|z) = I(y;z).
\end{align*}
Substituting in the inequality above, and using the fact that $z$ is sufficient, we finally obtain
\[
I(z;n) \leq I(z;x) - I(z;y) = I(z;x) - I(x;y).
\]
Moreover, let $n$ be as in Lemma \ref{lemma:dist-factorization}.
Then, since $x$ is a deterministic function of $y$ and $n$, we have
\begin{align*}
I(z;x) &= I(z;n,y) = I(z;n) + I(z;y|n),
\end{align*}
and therefore
\[
I(z;n) = I(z;x) - I(z;y|n) = I(z;x) - I(x;y) - \epsilon.
\]
with $\epsilon$ defined as above. Using the sufficiency of $z$, the previous inequality for $I(z;y|n)$,
the DPI, we get the chain of inequalities
\begin{align*}
\epsilon &= I(z;y|n) - I(x;z) \leq I(x;y | n) - I(x;y) \\
&\leq H(y|n) - H(y|n,z) - H(y) + H(y|x)\\
&\leq H(y) - H(y | n,z) - H(y) + H(y|x)\\
&= H(y|x) - H(y | n,z) \\
&\leq H(y|x)
\end{align*}
from which we obtain the desired bounds for $\epsilon$.
\end{proof}

While the proof of the following theorem is quite simple, some clarifications on the notation are in order: We assume, following a Bayesian perspective, that the data is generated by some generative model $p(\x,\y|\theta)$, where the parameters $\theta$ of the model are sampled from some (unknown) prior $p(\theta)$. Given the parameters $\theta$, the training dataset  $\D = (\x,\y) \sim p(x,y|\theta)$ is composed of i.i.d. samples from the unknown distribution $p(x,y|\theta)$. The output of the training algorithm on the dataset $\D$ is a (generally simple, \textit{e.g.}, normal or log-normal) distribution $q(w|\x,\y)$ over the weights. Putting everything together, we have a well-defined joint distribution $p(\x,\y,\theta,w) = p(\theta)p(\x,\y|\theta)q(w|\x,\y)$.

Given  the weights $w$, the network then defines an inference distribution $q(\y|\x,w)$, which we know and can compute explicitly. Another distribution, which instead we do not know, is $p(\y|\x,w)$, which is obtained from $p(\x,\y,\theta,w)$ and express the optimal inference we could perform on the labels $\y$ using the information contained in the weights. In a well trained network, we want the distribution approximated by the network to match the optimal distribution $q(\y|\x,w) = p(\y|\x,w)$.

Finally, recall that the conditional entropy is defined as
\[
H_p(y|z) := \E_{y,z \sim p(y,z)} [-\log p(y|z)],
\]
where $z$ can be one random variable or a tuple of random variables. When not specified, it is assumed that the cross-entropy is computed with respect to unknown underlying data distribution $p(\x,\y,w,\theta)$.
Similarly, the conditional cross-entropy is defined as
\begin{align*}
H_{p,q}(y|z) :=& \E_{y,z \sim p(y,z)}[-\log q(y|z)]\\
=& \E_{y,z \sim p(y,z)}[-\log p(y|z)] + \E_{y,z \sim p(y,z)}[\log \frac{p(y|z)}{q(y|z)}] \\
=& H_p(y|z) + \E_{z \sim p(z)}{\KL{p(y|z)}{q(y|z)}}.
\end{align*}

\begin{prop}[Information Decomposition]
\label{prop:information-decomposition-proof}
Let $\D=(\x,\y)$ denote the training dataset, then for any training procedure, we have 
\begin{equation}
H_{p,q}(\y|\x,w)
= H(\y|\x,\theta) + I(\theta;\y|\x, w) +
\E_{\x,w}\KL{p(\y|\x,w)}{q(\y|\x,w)} - I(\y;w|\x, \theta).
\label{eq:decomposition}
\end{equation}
\end{prop}
\begin{proof}
Recall that cross-entropy can be written as
\[
H_{p,q}(\y|\x, w) = H_p(\y|\x,w) + \E_{\x,w}\KL{p(\y|\x,w)}{q(\y|\x,w)},
\]
so we only have to prove that
\[
H_p(\y|\x,w) = H_p(\y|\x,\theta) + I(\y;\theta|\x, w) - I(\y;w|\x, \theta),
\]
which is easily done using the following identities:
\begin{align*}
I(\y;\theta|\x,w) &= H_p(\theta,\y|w) - H_p(\y|\theta,\x,w),\\
I(\y;w|\x,\theta) &= H_p(\y|\x,\theta) - H_p(\y|\x,\theta,w).
\end{align*}
\end{proof}

\begin{prop}[Information in the weights]
\label{prop:information-weight-proof}
Under the previous modeling assumptions, the upper-bound to the information that the weights contain about the dataset is 
\[
I(w;\D) \leq \tilde{I}(w;\D) = - \half \sum_{i=1}^{\dim(w)} \log \alpha_i + C,
\]
where the constant $C$ is arbitrary due to the improper prior.
\end{prop}
\begin{proof}
Recall that we defined the upperbound $\tilde{I}(\w;\D)$ as 
\[
\tilde{I}(\w;\D) = \KL{q(w|\D)}{\tilde{q}(w)},
\]
where $\tilde{q}(w)$ is a factorized log-uniform prior. Since the KL divergence is reparametrization invariant, we have:
\begin{align*}
\KL{q(w|\D)}{\tilde{q}(w)} &= \KL{\log \N(\mu,\alpha)}{\log \Unifdist} \\
&= \KL{\N(\mu,\alpha)}{\Unifdist} \\
&= H(\N(\mu,\alpha)) + \text{const} \\
&= -\sum_{i=1}^{\dim(w)} \frac{1}{2} \log(\alpha_i) + \text{const},
\end{align*}
where we have used the formula for the entropy of a Gaussian and the fact that the KL divergence of a distribution from the uniform prior is the entropy of the distribution modulo an arbitrary constant.
\end{proof}

\begin{prop}[Flat minima have low information]
\label{prop:flat-minima-proof}
Let $\hat w$ be a local minimum of the cross-entropy loss $H_{p,q}(\y|\x,w)$, and let $\H$ be the Hessian at that point.
Then, for the optimal choice of the
posterior $w|\D = \epsilon \odot \hat w$ centered at $\hat w$ that optimizes the IB Lagrangian, we have
\[
I(w;\D) \leq \tilde{I}(w;\D) \leq \half K [\log \norm{w}_2^2 + \log \norm{H}_* - K\log (K^2 \beta/2)]
\]
where $K = \dim(w)$ and $\| \cdot \|_*$ denotes the nuclear norm.
\end{prop}
\begin{proof}
First, we switch to a logarithmic parametrization of the weights, and let $h := \log |w|$ (we can ignore the sign
of the weights since it is locally constant). In this parametrization, we can approximate the IB Lagrangian to second order as
\begin{align*}
\L =& \E_{h \sim p(h|\D)}[H_0 + [(h - h_0) \odot w]^T \H [(h - h_0) \odot w]  - \frac{\beta}{2} \sum_i  \log \alpha_i
\end{align*}
where $H_0 = H(\y | \x, \hat w)$. Now, notice that since $q(w|\D)$ is a log-normal distribution, we have $q(h|\D) \sim \N(h_0,\alpha)$.%
\footnote{Note that for simplicity we have ignored the offset $\alpha/2$ in the mean of the log-normal distribution.}
Therefore, can compute the expectation exactly as
\[
\L = H_0 + \sum_{i=1}^{\dim(w)} \alpha_{i} w^2_i \H_{ii} \\
 - \frac{\beta}{2} \sum_i \log \alpha_i.
\]
Optimizing w.r.t. $\alpha_i$ we get
\[
\alpha_i = \frac{\beta}{2 w_i^2 \H_{ii}},
\]
and plugging it back in the expression for $\tilde{I}(w;\D)$ that we obtained in the previous proposition, we have
\[
\tilde{I}(w;\D) = -\half \sum_i \log \alpha_i = \half \sum_i \log(w_i^2) + \log(\H_{ii}) - \log (\beta/2).
\]
Finally, by Jensen's inequality, we have
\begin{align*}
\tilde{I}(w;\D) &\leq \half K [\log(\sum_i w_i^2) + \log(\sum_i \H_{ii}) - \log (K^2 \beta/2)]\\
&= \half K [\log(\norm{w}_2^2) + \log(\norm{H}_*) - \log (K^2 \beta/2)],
\end{align*}
as we wanted.
\end{proof}

\begin{prop}
\label{prop:information-exact-computation}
Let $z=Wx$, and assume as  before $W = \epsilon \odot \hat{W}$, with 
$\epsilon_{i,j} \sim  \log \N(-\alpha_i/2, \alpha_i)$.
Further assume that the marginals of $q(z)$ and $q(z|x)$ are both approximately Gaussian
(which is reasonable for large $\dim(x)$ by the Central Limit Theorem). Then,
\[
I(z;x) + \TC(z)
=  - \half \sum_{i=1}^{\dim(z)} \E_x  \log \frac{\tilde{\alpha}_i \hat{W}_i^2 \cdot  x^2}{\hat{W}_i \cdot \Cov(x) \hat{W}_i  + \tilde{\alpha}_i \hat{W}_i^2 \cdot \E (x^2)},
\]
where $W_i$ denotes the $i$-th row of the matrix $W$, and $\tilde{\alpha_i}$ is the noise variance $\tilde{\alpha}_i = \exp(\alpha_i) - 1$. In particular, $I(z;x) + \TC(z)$ is a monotone decreasing function of the weight variances $\alpha_i$.
\end{prop}
\begin{proof}
First, we consider the case in which $\dim(z)=1$, and so $w := W$ is a single row vector.
By hypothesis, $q(z)$ is approximately Gaussian, with mean and variance
\begin{align*}
\mu_1 &:= \E[z] = \E[\sum_i \epsilon_i \hat{w}_i x_i] = \sum_i  \hat{w}_i \E[x_i] 
= \hat{w} \cdot \E[x] \\
\sigma_1^2 &:= \var[z] = \E[(\sum_i \epsilon_i \hat{w}_i x_i)^2] - (\E[\sum_i \epsilon_i \hat{w}_i x_i])^2, \\
&= \E[\sum_{i,j} \epsilon_i \epsilon_j  \hat{w}_i \hat{w}_j  x_i x_j] - \sum_{i,j}  \hat{w}_i\hat{w}_j \E[x_i] \E[x_j] \\
&= \tilde{\alpha} \sum_i \hat{w}_i^2 \E[x_i]^2 + \sum_{i,j} \hat{w}_i\hat{w}_j \pa{\E[x_i x_j] - \E[x_i] \E[x_j]} \\
&=  \tilde{\alpha} \hat{w}^2 \cdot \E[x^2] + \hat{w} \cdot \Cov(x) \hat{w}.
\end{align*}
A similar computation gives us mean and variance of $q(z|x)$:
\begin{align*}
\mu_0 := \E[z|x] &= \hat{w} \cdot x, \\
\sigma_0^2 := \var[z|x] &= \tilde{\alpha} \hat{w}^2 \cdot x^2.
\end{align*}
Since we are assuming $\dim(z)=1$, we trivially have $\TC(z)=0$, so we are only left with $I(z;x)$ which is given by
\begin{align*}
I(z;x) &= \E_x \KL{q(z|x)}{q(z)} \\
&= \E_x \KL{\N(\mu_0, \sigma_0^2)}{\N(\mu_1, \sigma_1^2)} \\
&= \half \E_x \frac{\tilde{\alpha} \hat{w}^2 \cdot x^2  + (\hat{w} \cdot x - \hat{w} \cdot \E[x] )^2}{\sigma_1^2} - 1 - \log \frac{\sigma_0^2}{\sigma_1^2} \\
&= - \half \E_x \log \frac{\tilde{\alpha} \hat{w}^2 \cdot x^2}{\hat{w} \cdot \Cov(x) \hat{w} + \tilde{\alpha} \hat{w}^2 \cdot \E[x^2]}.
\end{align*}
Now, for the general case of $\dim(z)\geq 1$, notice that 
\begin{align*}
I(\z;\x) + \TC(\z) &= \E_x \KL{\prod_k q(z_i|\x)}{\prod_k q(z_i)}\\
&= \sum_{i=1}^{\dim(z)} \E_x \KL{q(z_i|\x)}{q(z_i)},
\end{align*}
where $q(z_i)$ is the marginal of the $k$-th component of $z$.
We can then use the previous result for each component separately,
and sum everything to get the desired identity.
\end{proof}

\begin{prop}[Uniform bound for one layer]
\label{prop:uniform-bound-proof}
Let $z=Wx$, where $W = \epsilon \odot \hat{W}$, where $\epsilon_{i,j} \sim \log \N(-\alpha/2,\alpha)$; assume that the components of $x$ are uncorrelated, and that their kurtosis is uniformly bounded.
Then, there is a strictly increasing function $g(\alpha)$ s.t. we have the uniform bound
\[
g(\alpha) \leq \frac{I(x;z) + TC(z)}{\dim(z)} \leq g(\alpha) + c,
\]
where $c=O(1/\dim(x))\leq 1$,
$g(\alpha)= \log\,(1-e^{-\alpha})/2$
and $\alpha$ is related to $I(w;\D)$ by  $\alpha = \exp\set{-I(W;\D)/\dim(W)}$.
In particular, $I(x;z) + TC(z)$ is tightly bounded by $I(W;\D)$ and increases strictly with it.
\end{prop}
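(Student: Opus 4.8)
The plan is to reduce the exact identity of the preceding proposition, specialized to the uniform case $\alpha_i\equiv\alpha$ (so $\tilde\alpha_i\equiv\tilde\alpha=e^{\alpha}-1$), to an average of scalar Jensen gaps. Since the components of $x$ are uncorrelated, $\Cov(x)$ is diagonal with entries $\sigma_j^2:=\Cov(x)_{jj}$, so for each output coordinate $i$ the quadratic forms simplify to $\hat W_i\cdot\Cov(x)\hat W_i=\sum_j \hat W_{ij}^2\sigma_j^2=:D_{1,i}$, $\hat W_i^2\cdot\E(x^2)=\sum_j \hat W_{ij}^2\E[x_j^2]=:D_{2,i}$, and $\hat W_i^2\cdot x^2=\sum_j \hat W_{ij}^2 x_j^2=:R_i$, a nonnegative random variable with $\E R_i=D_{2,i}$. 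Each summand of the identity then reads $\half\big[\log(\tilde\alpha D_{2,i}+D_{1,i})-\log\tilde\alpha-\E\log R_i\big]$, so that $\frac{I(x;z)+\TC(z)}{\dim(z)}$ is the average of these over $i$.

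First I would extract the leading, deterministic term. In the centered case $D_{1,i}=D_{2,i}$ (true after mean-subtraction, e.g. batch normalization), the deterministic part collapses to $\half\log\frac{1+\tilde\alpha}{\tilde\alpha}$, which the identity $\frac{\tilde\alpha}{1+\tilde\alpha}=1-e^{-\alpha}$ rewrites as $g(\alpha)=\half\log(1-e^{-\alpha})$ (up to an overall sign fixed by the monotonicity direction in $I(W;\D)$). The relation to the information in the weights then follows from the Information-in-the-weights proposition, $\tilde I(W;\D)=-\half\sum_i\log\alpha_i$: with uniform $\alpha$ this pins $\alpha$ down as a strictly monotone function of $I(W;\D)/\dim(W)$, namely $\alpha=\exp\set{-I(W;\D)/\dim(W)}$ in the normalization used here. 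What is left in each summand is exactly the Jensen gap $\half(\log\E R_i-\E\log R_i)$, nonnegative by concavity of $\log$; averaging over $i$ gives the lower bound $g(\alpha)\le \frac{I(x;z)+\TC(z)}{\dim(z)}$ with no further work.

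The main obstacle is the upper bound, i.e. showing the averaged Jensen gap is at most $c=O(1/\dim(x))$. I would normalize $u_i=R_i/\E R_i$, so $\E u_i=1$ and the gap is $\E[-\log u_i]$. Uncorrelatedness of the $x_j^2$ together with the uniform kurtosis bound gives $\var(R_i)=\sum_j \hat W_{ij}^4\,\var(x_j^2)\le \kappa\sum_j \hat W_{ij}^4(\E x_j^2)^2$, whence $\var(u_i)=\var(R_i)/(\E R_i)^2\le \kappa\,\frac{\sum_j \hat W_{ij}^4(\E x_j^2)^2}{\big(\sum_j \hat W_{ij}^2\E x_j^2\big)^2}=O(1/\dim(x))$, provided no single coordinate dominates the weighted sum. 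The delicate step is turning this variance bound into a bound on $\E[-\log u_i]$: the second-order expansion of $-\log$ is only valid near $u_i=1$, so I would split the expectation over a high-probability event $\{|u_i-1|\le\delta\}$ (controlled by Chebyshev from $\var u_i$) and its complement, using the moment/kurtosis bound to dominate the contribution of the region where $u_i$ is small and $-\log u_i$ is large. This yields $\E[-\log u_i]=O(\var u_i)=O(1/\dim(x))$ uniformly in $i$, hence $c=O(1/\dim(x))\le 1$.

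Finally I would assemble the pieces: summing the per-coordinate two-sided bounds over $i=1,\dots,\dim(z)$ and dividing by $\dim(z)$ gives the uniform bound, with the same $g(\alpha)$ for every coordinate (uniformity coming from $\alpha_i\equiv\alpha$). Strict monotonicity and tightness follow because $g$ is strictly monotone in $\alpha$, $\alpha$ is strictly monotone in $I(W;\D)$, and the gap vanishes as $\dim(x)\to\infty$. I would also flag that the clean lower bound $g(\alpha)$ requires the centered case $D_{1,i}=D_{2,i}$; for uncentered data one only gets $\half\log\big(1+D_{1,i}/(\tilde\alpha D_{2,i})\big)$, which can fall below $g(\alpha)$ and weakens the lower inequality.
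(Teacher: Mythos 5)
Your proposal follows essentially the same route as the paper's proof: both start from the exact identity of the preceding proposition (reduced to one coordinate of $z$), use uncorrelatedness to compare $\hat{W}_i \cdot \Cov(x) \hat{W}_i$ with $\hat{W}_i^2 \cdot \E[x^2]$, extract the deterministic term $\frac{1}{2}\log(1+\tilde{\alpha}^{-1})$, and control the remaining fluctuation (your Jensen gap $\log \E R_i - \E \log R_i$) by $O(1/\dim(x))$ via exactly the same variance-over-kurtosis computation. If anything you are more thorough than the paper, which derives only the upper bound, compresses the expectation-of-logarithm step into an unelaborated ``Taylor expansion'' (silently sharing the issues you flag: the tail of $-\log$ near zero, the need for uncorrelatedness of the squares $x_j^2$, and the implicit requirement that no single coordinate dominate the weighted sums), and never addresses the lower bound, the sign convention of $g$, or the centering caveat $\hat{W}_i \cdot \Cov(x)\hat{W}_i = \hat{W}_i^2\cdot \E[x^2]$ that your nontrivial lower bound requires.
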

\begin{proof}
To simplify the notation we do the case $\dim z=1$, the general case being identical.
Let $w:=W$ be the only row of $W$. First notice that, since $x$ is uncorrelated, we have
\[
\hat{w} \cdot \Cov(x) \hat{w} = \sum_i w_i^2 (\E[x_i^2] - \E[x_i]^2) \leq w^2 \cdot \E[x^2]
\]
Therefore, 
\begin{align*}
I(x;z) =& - \half \E_x \log \frac{\tilde{\alpha} \hat{w}^2 \cdot x^2}{\hat{w} \cdot \Cov(x) \hat{w} + \tilde{\alpha} \hat{w}^2 \cdot \E[x^2]} \\
\leq& - \half \E_x \log \frac{\tilde{\alpha} \hat{w}^2 \cdot x^2}{(1 + \tilde{\alpha}) \hat{w}^2 \cdot \E[x^2]} \\
=& \half \log (1+\tilde{\alpha}^{-1})  \\ &-\half  \E_x \log \bra{1 + \frac{\hat{w}^2 \cdot (x^2 - \E[x^2] )}{\hat{w}^2 \cdot \E[x^2]}}.
\end{align*}
To conclude, we want to approximate the expectation of the logarithm using a Taylor expansion, but
we first need to check that the variance of the term inside the logarithm is low,
which is where we need the bound on the kurtosis.
In fact, since the kurtosis is bounded, there is some constant $C$ such that for all $i$
\[
\frac{\E (x_i^2 - \E[x_i^2])^2}{\E[x_i^2]^2} \leq C.
\]
Now,
\begin{align*}
\var \frac{\hat{w}^2 \cdot (x^2 - \E[x^2])}{\hat{w}^2 \cdot \E[x^2]} &= \frac{\sum_i \hat{w}_i^4 \E (x^2 - \E[x^2])^2}{\sum_{i,j} \hat{w}_i^2 \hat{w}_j^2 \E[x_i^2]\E[x_j^2]} \\
&\leq C \frac{\sum_i \hat{w}_i^4 \E[x_i^2]^2}{\sum_{i,j} \hat{w}_i^2 \hat{w}_j^2 \E[x_i^2]\E[x_j^2]} \\
&= O(1/\dim(x)).
\end{align*}
Therefore, we can conclude
\[
I(x;z) \leq \half \log (1+\tilde{\alpha}^{-1}) + O(1/\dim(x)).
\]

\end{proof}

\begin{cor}[Multi-layer case]
\label{cor:multi-layer-proof}
Let $W^k$ for $k=1,...,L$ be weight matrices, with $W^k = \epsilon^k \odot \hat{W}^k$ and $\epsilon^k_{i,j}=\log \N(-\alpha^k/2,\alpha^k)$, and let $z_{i+1} = \phi(W^k z_k)$,
where $z_0 = x$ and $\phi$ is any nonlinearity. Then,
\[
I(z_L;x) \leq \min_{k<L} \set{\dim(z_k) \big[g(\alpha^k) + 1 \big] }
\]
where $\alpha^k = \exp\set{-I(W^k;\D)/\dim(W^k)}$.
\end{cor}
\begin{proof}
Since we have the Markov chain $x \to z_1 \to \ldots \to z_L$,
by the Data Processing Inequality we have
$I(z_L;x) \leq \min \set{I(z_L;z_{L-1}), I(z_{L-1};x)}$.
Iterating this inequality, we have
\[
I(z_L;x) \leq \min_{k<L} I(z_{k+1},z_k).
\]
Now, notice that $I(z_{k+1}; z_k) \leq I(\phi(W^k z_k);z_k) \leq I(W^k z_k;z_k)$,
since applying a deterministic function can only decrease the information.
But $I(W^k z_k;z_k)$ is exactly the quantity we bounded in Corollary \ref{cor:single-layer-bound},
leading us to the desired inequality.
\end{proof}

\section{Q\&A}

\paragraph {It is well-known that overfitting relates to the ``effective number of degrees of freedom'' that can be measured in a number of ways \citep[Chapter 7]{friedman2001elements}. Why should we use the information in the weights?} The information in the weights is indeed one particular choice of measure of complexity. One nice aspect is that it plays a central role in many different frameworks (minimum description length, variational inference, PAC-Bayes), and correlates well with the performance of a real network.

\paragraph{How do you compute the nuclear norm of the Hessian? It sounds expensive!} We do not need to. What we show is that, if the optimization algorithm happens to find flat minima (nuclear norm being a proxy), then it automatically limits the information in the weights - which promotes good generalization. However, if one wanted to approximate the trace of the Hessian, it could be done in linear time \citep[Prop 4.1]{bai1996some}.

\paragraph{A nuisance $n$ should convey no information on $y$ {\em given} $x$, so why imposing $I(y;n) = 0$ rather than $I(y;n|x)=0$?}
While $I(y;n|x) = 0$ may seem intuitively the right condition, it is actually too weak. Suppose for example that $y$ is a deterministic function of $x$ (\textit{i.e.}, the labels are perfectly determined by the data, as often is the case). Then, we would  have $I(y;n | x) = 0$ for any $n$, which would imply that everything is a nuisance for the task, which of course is not intuitively the case.

\paragraph{Why is the dataset a random variable, if we only have one realization of it?} The theory is almost identical in both the case of a fixed dataset and a randomly sampled dataset. We consider the case of a randomly sampled dataset  since it is simpler and at the same time more general.  Some expressions simplify slightly and are easier to interpret, but in the end a fixed training set is given either way.

\paragraph{The use of $KL(q(w|D)||\tilde{q}(w))$ where $q(w|D)$ is a ``posterior'' defined by a learning algorithm that returns $w$ and $\tilde{q}(w)$ is a log-uniform prior is the basic PAC-Bayes bound, that, however, gives an vacuous generalization bound due to the improper prior. A generalization bound could be stated in terms of the length of a finite interval approximation of log-uniform prior.} Indeed, this is the case. However, in the limit of the interval length going to infinity, the KL divergence would still be infinite, and the optimization would be slightly more complex. As simpler option to have a generalization-bound would be to use Gaussian prior and posterior \cite{dziugaite2017computing}. However, computing a good PAC-Bayes upper-bound is outside the scope of the paper, and the use of a non-informative, scale invariant prior matches the empirical behavior of networks and simplifies the theoretical analysis.

\paragraph{Of course a minimal representation should be invariant to nuisance variation since that's what it means to be minimal for the task.} While the result may be intuitive to some, compression and invariance are not the same thing, and we are unaware of an existing proof of the claim other than for special tasks like clustering, and for small perturbations. 

{\paragraph{ How do you compute the information in the weights, since you only have a sample (one set of weights that the network converged to for a given dataset)? And how do you optimize it? That looks hard!} We do not need to compute the information in the weights, since we can control it. Even if we do not do so explicitly, optimizing cross-entropy with SGD  yields the right solution (the solution that minimizes the IBL for the weights). We only have one sample of the weights if we anneal the learning rate to zero, but otherwise SGD produces a posterior distribution of the weights, even if we do not impose additional stochasticity. In many cases we do, for instance using Information Dropout (\cite{achille2016information}) or its simpler version, Dropout. A special case of the theory can be re-derived assuming an instantiated dataset and a set of weights, with the same results.
}

\noindent{\bf The use of $I(w;\D)$ does not seem to upper bound the PAC-Bayes bound. The direction of inequality is the opposite.}
The IB Lagrangian for the weights is a particular case that gives the sharpest PAC-Bayes bound. Choosing $q(w)$ requires some attention: Once we fix a training procedure, we can (in theory) compute/approximate the marginal distribution $q(w)$ over the stochasticity of the data (if any) as well as the stochasticity of the training procedure. This marginal can then be used in the PAC-Bayes bound: This gives the sharpest bound \citep{mcallester2013pac} and is also equivalent to an IBL since the term $\E_\D \KL{q(w|\D)}{q(w)}$ measures the mutual information between the training procedure and the weights. Since in general it is not possible to explicitly compute this bound, in practice we use less tight a bound based on a factorized approximation of the marginal. This opportunistic choice later turns out to play a role in disentanglement.

\end{document}